\def\BibTeX{{\rm B\kern-.05em{\sc i\kern-.025em b}\kern-.08em
    T\kern-.1667em\lower.7ex\hbox{E}\kern-.125emX}}
\pgfplotsset{compat = newest}
\pgfplotsset{every x tick label/.append style={font=\footnotesize, yshift=0.5ex}}
\pgfplotsset{every y tick label/.append style={font=\footnotesize, xshift=0.5ex}}
\newtheorem{theorem}{Theorem}
\newtheorem{proposition}{Proposition}
\newtheorem{example}{Example}
\newtheorem{lemma}{Lemma}
\theoremstyle{remark}
\newtheorem{remark}{Remark}
\DeclarePairedDelimiter\parentheses{\lparen}{\rparen}
\newcommand{\vect}[1]{\operatorname{vec} \parentheses*{#1}}
\newcommand{\rank}[1]{\operatorname{rank} \parentheses*{#1}}
\newcommand{\spanv}[1]{\operatorname{span} \parentheses* {#1}}
\newcommand{\diag}[1]{\mbox{diag}\left(#1\right)\,}
\newcommand{\colspan}[1]{\operatorname{colspace}\left(#1\right)\,}
\newcommand{\kernel}[1]{\operatorname{kernel}\left(#1\right)\,}
\newcommand{\vectinv}[1]{\operatorname{vec}^{-1} \parentheses*{#1}}
\newcommand{\defdomP}{\mathbb{R}^{n\times k}}
\newcommand{\defdomQ}{\mathbb{R}^{m\times k}}
\newcommand{\target}{\mathcal{T}}
\newcommand{\comment}[1]{}
\newcommand{\SigBlkI}[1][~]{\begin{bmatrix}
    \bar\Sigma_{#1} & 0 & 0 \\ 0 & 0 & 0 \\ 0 & 0 & 0
\end{bmatrix}}
\newcommand{\SigBlkF}[1][~]{\begin{bmatrix}
    0 & 0 & 0 \\ 0 & \bar\Sigma_{#1} & 0 \\ 0 & 0 & 0
\end{bmatrix}}
\DeclareMathOperator*{\argmin}{\arg\!\min}
\definecolor{darkgreen}{HTML}{008000}
\definecolor{darkred}{HTML}{A52A2A}
\begin{document}

\title{\LARGE \bf On the ISS Property of the Gradient Flow for Single Hidden-Layer Neural Networks with Linear Activations\thanks{This work was partially supported by ONR Grant N00014-21-1-2431 and AFOSR Grant FA9550-21-1-0289.}
}

\author{\normalsize Arthur Castello B. de Oliveira$^{1}$, Milad Siami$^{1}$, and Eduardo D. Sontag $^{1,2}$
\thanks{$^{1}$Department of Electrical \& Computer Engineering,
Northeastern University, Boston, MA 02115 USA
	(e-mails: {\tt\small \{castello.a, m.siami, e.sontag\}@northeastern.edu}).}
\thanks{$^{2}$Departments of Bioengineering,
Northeastern University, Boston, MA 02115 USA.}%  
}
% \author{\IEEEauthorblockN{Arthur Castello B. de Oliveira}
% \IEEEauthorblockA{\textit{Department of} \\ \textit{Electrical and Computer Engineering} \\
% \textit{Northeastern University}\\
% Boston, United States of America \\
% castello.a@northeastern.edu}
% \and
% \IEEEauthorblockN{Milad Siami}
% \IEEEauthorblockA{\textit{Department of} \\ \textit{Electrical and Computer Engineering} \\
% \textit{Northeastern University}\\
% Boston, United States of America \\
% m.siami@northeastern.edu}
% \and
% \IEEEauthorblockN{Eduardo D. Sontag}
% \IEEEauthorblockA{\textit{Departments of Bioengineering and} \\ \textit{Electrical
% and Computer Engineering} \\
% \textit{Northeastern University}\\
% Boston, United States of America \\
% e.sontag@northeastern.edu}
% }

\maketitle
\begin{abstract}

    Recent research in neural networks and machine learning suggests that using many more parameters than strictly required by the initial complexity of a regression problem can result in more accurate or faster-converging models -- contrary to classical statistical belief. This phenomenon, sometimes known as ``benign overfitting'', raises questions regarding in what other ways might overparameterization affect the properties of a learning problem. In this work, we investigate the effects of overfitting on the robustness of gradient-descent training when subject to uncertainty on the gradient estimation. This uncertainty arises naturally if the gradient is estimated from noisy data or directly measured. Our object of study is a linear neural network with a single, arbitrarily wide, hidden layer and an arbitrary number of inputs and outputs. In this paper we solve the problem for the case where the input and output of our neural-network are one-dimensional, deriving sufficient conditions for robustness of our system based on necessary and sufficient conditions for convergence in the undisturbed case. We then show that the general overparametrized formulation introduces a set of spurious equilibria which lay outside the set where the loss function is minimized, and discuss directions of future work that might extend our current results for more general formulations. 

\end{abstract}

% \begin{IEEEkeywords}
% Bilinear Networks, Adversarial Attacks, Robustness Measures, Supermodular Optimization
% \end{IEEEkeywords}

\section{Introduction}

    Benign overfitting is a phenomenon observed when training large/deep neural networks \cite{belkin2019reconciling}. This observation, when first made, was disruptive because it challenged the traditional notion that overfitting a model always decreased its performance. Since then, many works have attempted to explain or understand this phenomenon \cite{belkin2018understand,belkin2019reconciling,sanyal2020benign,cao2022benign,bartlett2020benign,zhang2021understanding,frei2022benign,arora2019implicit,gunasekar2017implicit,soudry2018implicit} for different classes of system.

    Two common simplifying assumptions made when analysing benign overfitting on neural networks are that of linear activation functions and a single hidden layer, which together make the problem equivalent to a linear regression. In \cite{bartlett2020benign} the authors derive conditions for benign overfitting to happen in linear regression problems. In \cite{arora2019implicit,gunasekar2017implicit} the authors discuss how the gradient descent on overparameterized linear regressions tends to prefer solutions with good generalization properties. These works show that this simplified version of the problem is not only an important first step for gaining insights about the general case, but is by itself this still a rich and complex problem.
    
    Moreover, overparameterization in linear regression problems is also known to potentially accelerate the training process \cite{chen2020accelerating,arora2018optimization}, and in recent works \cite{tarmoun2021understanding,min2021explicit} the authors characterize the convergence rate of the system as a function of the initialization of the gradient flow dynamics. Their work shows that the more imbalanced (in a sense formally defined in the papers) the initial conditions are, the quicker the system converges to an equilibrium. Furthermore, they show in \cite{min2021explicit} that the gradient flow for overparameterized linear regressions converges at least as quickly as the non overparameterized case. This is a very interesting result and naturally raises the question of what is the disadvantage of using overparameterized formulations, if not only the accuracy might be increased, but the training time is actually potentially quicker as well, despite the higher number of parameters.
    
    We then look at the robustness trade-off from adopting overparameterized formulations. Many works \cite{tsipras2018robustness,ilyas2019adversarial,javanmard2020precise,ribeiro2023overparameterized, min2021curious, yin2019rademacher} evaluate the post training performance when the input is subject to adversarial disturbances. This became a very active area of research once it was noticed in \cite{szegedy2013intriguing} that by applying imperceptible noises, one could completely fool image recognition networks, despite their high training accuracy. The works in the area focus on adapting the training process in order to maximize the robustness of the network to adversarial attacks while still maintaining satisfying performance.

    Other papers analyse the robustness of the gradient flow as a tool for minimizing arbitrary functions. In \cite{scaman2020robustness} the authors analyse the convergence of the stochastic gradient descent as a function of the probability distribution of the gradient noise. In \cite{sontag2022remarks} one of the coauthors showed that the gradient flow is ISS when the estimation of the gradient is uncertain as long as the loss function satisfies some conditions. This establishes a sense of robustness for this class of systems when no overparametrization is considered, and is a motivation for this paper to study whether this property is maintained, and to what degree, once an overparameterized formulation is considered.
    
    Many works in the literature analyse the behaviour of linear neural networks when submitted to some gradient dynamics \cite{Chitour2023geometric,kawaguchi2016deep,baldi1989neural,monzon2006local,panageas2016gradient,du2018algorithmic,schaeffer2020extending,eftekhari2020training}. Collection the many results in the literature for gradient systems and neural networks presented for different assumptions on the system allow us to conclude that for linear neural networks with a single hidden layer: all local minimum of the cost function are global minimum; all non local minimum critical points are strict saddles (the Hessian has at least one negative eigenvalue); all solutions converge to a critical point of the cost function; and for almost every initial condition the solutions converge to the global minimum. All of these results give a complete qualitative understanding of the behaviour of our solutions and allow us to conclude stochastic guarantees for the problem under consideration.
    
    In this work we take steps towards quantitatively characterizing the local behaviour of solutions of the gradient flow dynamics for linear neural networks with a single hidden layer, and study its robustness properties. We will formulate the problem of uncertain gradient flow for the general case, but in this preliminary paper we work out in detail only the scalar/vector case. Understanding in depth this simplified instance of the problem gives important insight on what to look for, and how to understand, the general case, and we discuss similarities and differences between the scalar/vector and general cases. We then provide a complete characterization of the local behavior of our system around the origin and the target set in terms of the SVDs of the parameter matrices at those critical points.

\section{Motivation and Formulation}

        \subsection{Preliminary Definitions}

            Along this paper we use $I$ to denote the identity matrix, that is a square matrix whose all elements are zero except for the ones in its main diagonal, which are one. If we want  to emphasise the dimension of the identity matrix we write $I_n$ where $I\in\mathbb{R}^n$, otherwise if the dimension is not indicated it is clear from context. Similarly, we define $e_i$ as the $i$-th elementary vector which is the $i$-th column of $I$ for the dimension implicit in the context. The matrix $E_{ij}=e_ie_j^\top$ is called an elementary matrix and has all elements zero except for the one in row $i$ and column $j$, which is one (notice that $E_{ij}$ does not need to be square). Let $\|\cdot\|_F:\mathbb{R}^{m\times n}\rightarrow \mathbb{R}_+$ and $\|\cdot\|_2:\mathbb{R}^n\rightarrow\mathbb{R}_+$ denote the Frobenius and $\ell_2$ norms for arbitrary matrix and vector spaces respectively.

            Let $\vect{\cdot}:\mathbb{R}^{n\times m}\rightarrow\mathbb{R}^{nm}$ be the vectorization operator which concatenates the columns of its input matrix into a single column vector. Notice that $\vect{\cdot}$ is bijective and thus admits inverse, denoted in this papers as $\vectinv{\cdot}$. Furthermore, let $\otimes$ denote the Kronecker product (which is commutative and bilinear), then for three arbitrary matrices $A$, $B$ and $C$ of matching dimensions, the following well known identity is used freely during some derivations of this paper
            \begin{equation}
                \label{eq:kronprodprop}
                \vect{ABC} = \left(C^\top\otimes A\right) \cdot \vect{B}.
            \end{equation}
        \subsection{Overparametrized Linear Regression}
            Given a set of $\ell$ paired sampled inputs $x = \{x_i\}_{i=1}^\ell$ and outputs $y = \{y_i\}_{i=1}^\ell$, where $x_i\in\mathbb{R}^n$ and $y_i\in\mathbb{R}^m$, the \emph{linear regression problem} can be expressed as the following optimization problem
            \begin{equation}
                \begin{aligned}
                    \min_{\Theta \in \mathbb{R}^{n\times m}} \quad & \frac{1}{2}\|Y-\Theta^\top X\|_F^2,
                \end{aligned}
            \end{equation}
            where $X$ and $Y$ are $n\times \ell$ and $m\times \ell$ real matrices whose $i$-th columns are $x_i$ and $y_i$, respectively.
            
            If we assume a rich enough dataset ($\ell>\max\{m,n\}$ and that $X$ is a full rank matrix) possibly with additive Gaussian noise, the unique solution to this problem can be obtained as follows:
            \begin{equation}
                \Theta^* = (YX^\dagger)^\top.
            \end{equation}
            
            In general, an optimization problem is said to be overparametrized if the number of parameters in its search-space is larger than the minimum number necessary to solve it. For the purposes of this paper, we define the overparameterized problem under consideration as follows:
            \begin{equation}
                \label{eq:linreg}
                \begin{aligned}
                    \argmin_{P\in\defdomP, Q\in\defdomQ} \quad & \frac{1}{2}\|Y-QP^\top X\|_F^2,
                \end{aligned}
            \end{equation}
            where $k\geq n,m$. One can verify that $P$ and $Q$ solve \eqref{eq:linreg} if and only if they also solve the following matrix factorization problem:
            \begin{equation}
                \label{eq:probformmatfac}
                \begin{aligned}
                    \argmin_{P\in\defdomP, Q\in\defdomQ} \quad & \frac{1}{2}\|\bar{Y}-PQ^\top\|_F^2,
                \end{aligned}
            \end{equation}
            where $\bar{Y} = \Theta^* = (YX^\dagger)^\top$, albeit at a different minimum value. One possible method for solving the matrix factorization problem \eqref{eq:probformmatfac} is the use of a gradient flow for the dynamics of the parameters, however the resulting dynamics can be shown to have multiple \emph{spurious equilibria}, that is, equilibrium points of the dynamics that do not lie in the target set defined by $\target := \{(P,Q) ~|~ \bar{Y}=PQ^\top\}$.
        
        \subsection{The Gradient Flow Dynamics}
            
            To impose a gradient flow dynamics for the parameters, let us define the loss function as follows:
            \begin{equation}
                \label{eq:lossfunc}
                \mathcal{L}(P,Q) = \frac{1}{2}\|\bar{Y}-PQ^\top\|_F^2.
            \end{equation}
            
            Then, as derived in \cite{min2021explicit}, we impose the following dynamics for the parameters $P$ and $Q$
            \begin{equation}
                \begin{split}
                    \dot{P} = -\nabla_P\mathcal{L}(P,Q) = (\bar{Y}-PQ^\top)Q, \\
                    \dot{Q} = -\nabla_Q\mathcal{L}(P,Q) = (\bar{Y}-PQ^\top)^\top P,
                \end{split}
            \end{equation}
            or equivalently
            \begin{equation}
                \label{eq:ssgradflow}
                \small
                \dot{Z} = \begin{bmatrix} 
                    \dot{P} \\ \dot{Q}
                \end{bmatrix} = \begin{bmatrix} 
                    (\bar{Y}-PQ^\top)Q \\ (\bar{Y}-PQ^\top)^\top P
                \end{bmatrix} = \begin{bmatrix}
                        f_P(P,Q) \\ f_Q(P,Q)
                \end{bmatrix} = f_Z(Z).
            \end{equation}
            
            Often, however, the gradient value used to enforce the dynamics is an estimation of its true value and has an uncertainty associated with it. To model this uncertainty we add two disturbance terms on the dynamics as below
            \begin{equation}
                \label{eq:ssgradflowpert}
                \small
                \dot{Z} = \begin{bmatrix} 
                    \dot{P} \\ \dot{Q}
                \end{bmatrix} = \begin{bmatrix} 
                    (\bar{Y}-PQ^\top)Q \\ (\bar{Y}-PQ^\top)^\top P
                \end{bmatrix} + \begin{bmatrix}
                        U \\ V
                \end{bmatrix} = f_Z(Z)+\begin{bmatrix}
                        U \\ V
                \end{bmatrix},
            \end{equation}
            where $U\in\mathbb{R}^{n\times k}$ and $V\in\mathbb{R}^{m\times k}$. In the next section we explore a candidate Lyapunov function as means to characterize the stability of our system as a function of the magnitude of our disturbances and of our initialization.
        
        \subsection{The Loss Function as a Candidate Lyapunov Function}
        
            A natural choice for a candidate Lyapunov function is our loss function. By definition, $\mathcal{L}(P,Q)>0$ whenever $P,Q$ are not on the target set $\bar{Y}=PQ^\top$. Furthermore, one can compute an upperbound on the time derivative of the loss function under gradient flow as follows
            \begin{equation}
                \label{eq:Ldotupbnd}
                \begin{split}
                    \dot{\mathcal{L}}(P,Q,U,V) &= \left\langle \nabla \mathcal{L}, \begin{bmatrix}
                        \dot P \\ \dot Q
                    \end{bmatrix}\right\rangle \\&= \left\langle \nabla \mathcal{L}, -\nabla \mathcal{L}+\begin{bmatrix}
                        U \\ V
                    \end{bmatrix}\right\rangle \\ &= -\|\nabla \mathcal{L}\|_F^2 + \left\langle \nabla \mathcal{L}, \begin{bmatrix}
                        U \\ V
                    \end{bmatrix}\right\rangle \\&\leq -\|\nabla \mathcal{L}\|_F^2+\|\nabla\mathcal{L}\|_F\left\| \begin{bmatrix}
                        U \\ V
                    \end{bmatrix}\right\|_F .
                \end{split}
            \end{equation}
            
With this, we can establish the following theorem:
            
            \begin{theorem}
            \label{thm:LdotUB}
            
                The time derivative of the loss function \eqref{eq:lossfunc} can be upperbounded as follows:
                \begin{equation}
                    \small
                    \label{eq:DissipIneq}
                    \dot{\mathcal{L}} \leq - \mathcal{L}(P,Q)\cdot\left(\sigma_{\min}^2(Q) + \sigma_{\min}^2(P)\right) + \frac{1}{2}\left\| \begin{bmatrix}
                            U \\ V
                        \end{bmatrix}\right\|^2_F.
                \end{equation}
            \end{theorem}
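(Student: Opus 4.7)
The plan is to start from the last line of \eqref{eq:Ldotupbnd} and attack it with two independent estimates: (i) a Young-type inequality to split the cross term between $\nabla\mathcal{L}$ and $[U;V]$, and (ii) a lower bound on $\|\nabla\mathcal{L}\|_F^2$ in terms of $\mathcal{L}(P,Q)$ and the smallest singular values of $P$ and $Q$. Combining these two will directly yield the claimed dissipation inequality \eqref{eq:DissipIneq}.

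First, applying Young's inequality $ab\le \tfrac{1}{2}a^2+\tfrac{1}{2}b^2$ with $a=\|\nabla\mathcal{L}\|_F$ and $b=\|[U;V]\|_F$ to the last term in \eqref{eq:Ldotupbnd} gives
\begin{equation*}
    \dot{\mathcal{L}} \;\leq\; -\tfrac{1}{2}\|\nabla\mathcal{L}\|_F^2 \;+\; \tfrac{1}{2}\left\|\begin{bmatrix}U\\V\end{bmatrix}\right\|_F^2.
\end{equation*}
So it only remains to bound $\tfrac{1}{2}\|\nabla\mathcal{L}\|_F^2$ from below by $\mathcal{L}(P,Q)\cdot(\sigma_{\min}^2(P)+\sigma_{\min}^2(Q))$.

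For the singular-value lower bound, denote the residual $R:=\bar{Y}-PQ^\top$, so that $\nabla_P\mathcal{L}=-RQ$ and $\nabla_Q\mathcal{L}=-R^\top P$. Using the cyclic property of the trace,
\begin{equation*}
    \|RQ\|_F^2 \;=\; \tr\bigl(R^\top R\,QQ^\top\bigr), \qquad \|R^\top P\|_F^2 \;=\; \tr\bigl(RR^\top\,PP^\top\bigr).
\end{equation*}
Since $k\ge n,m$, the Gram matrices satisfy $QQ^\top\succeq \sigma_{\min}^2(Q)\,I_m$ and $PP^\top\succeq \sigma_{\min}^2(P)\,I_n$. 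The standard trace inequality $\tr(AB)\ge \lambda_{\min}(B)\,\tr(A)$ for symmetric positive semidefinite $A,B$ then yields
\begin{equation*}
    \|\nabla\mathcal{L}\|_F^2 \;\geq\; \bigl(\sigma_{\min}^2(P)+\sigma_{\min}^2(Q)\bigr)\|R\|_F^2 \;=\; 2\,\mathcal{L}(P,Q)\,\bigl(\sigma_{\min}^2(P)+\sigma_{\min}^2(Q)\bigr),
\end{equation*}
where the last equality uses the definition \eqref{eq:lossfunc}. Substituting this into the Young's-inequality bound gives exactly \eqref{eq:DissipIneq}.

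I expect the main technical point to be the singular-value lower bound on $\|\nabla\mathcal{L}\|_F^2$: one has to notice that, even though the factors $P$ and $Q$ are rectangular, the overparameterization $k\ge n,m$ guarantees that $PP^\top$ and $QQ^\top$ are each bounded below by a multiple of the identity whenever the corresponding factor has full row rank, and to apply the trace identity in the correct order so that the smallest eigenvalue can be factored out. Everything else is a routine manipulation of the inner product and an application of Young's inequality, with no explicit dependence on the factorization structure beyond what \eqref{eq:Ldotupbnd} already encodes.
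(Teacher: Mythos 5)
Your proposal is correct and follows essentially the same two-step strategy as the paper's proof: apply Young's inequality $\|\nabla\mathcal{L}\|_F\|[U;V]\|_F\le\tfrac12\|\nabla\mathcal{L}\|_F^2+\tfrac12\|[U;V]\|_F^2$, and then lower-bound $\|\nabla\mathcal{L}\|_F^2$ by $2\mathcal{L}(P,Q)(\sigma_{\min}^2(P)+\sigma_{\min}^2(Q))$. The only cosmetic difference is that you establish the singular-value lower bound via the trace identity $\|RQ\|_F^2=\tr(R^\top RQQ^\top)\ge\lambda_{\min}(QQ^\top)\|R\|_F^2$, whereas the paper does it row-by-row as $\|AB\|_F^2=\sum_i\|a_iB\|_2^2\ge\sigma_{\min}^2(B)\|A\|_F^2$; both are standard and equivalent (and, for the record, both bounds hold unconditionally, not just in the full-row-rank case, with $\sigma_{\min}$ understood as the $m$-th or $n$-th singular value so that the inequality degenerates harmlessly when that value is zero).
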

           
            \begin{proof}
    
                First, for two arbitrary matrices $A$ and $B$ consider the following lower-bound on the squared Frobenius-norm of their product:
                \begin{equation}
                    \label{eq:lowerboundfrobeniussingulavalue}
                    \begin{split}
                        &\|AB\|^2_F = \sum_i \|a_iB\|_2^2 \\ &\geq \sum_i\sigma_{\min}^2(B)\|a_i\|_2^2 = \sigma_{\min}^2(B)\|A\|_F^2,
                    \end{split}
                \end{equation}
                where $a_i$ is the $i$-th row of $A$ (and the Frobenious norm of the row vector $a_iB$ is the same as its Euclidean norm). Then we can prove the results from the Theorem through the following algebraic calculation
                {\small
                \begin{align}
                    % \small
                    % \begin{split}
                        \|\nabla \mathcal{L}\|_F^2 &= \|(\bar{Y}-PQ^\top)Q\|_F^2+\|(\bar{Y}-PQ^\top)^\top P)\|_F^2 \nonumber \\ &\geq \left\|\bar{Y}-PQ^\top\right\|^2_F\sigma_{\min}^2(Q) + \left\|\bar{Y}-PQ^\top\right\|_F^2\sigma_{\min}^2(P)\nonumber\\&=\left\|\bar{Y}-PQ^\top\right\|_F^2(\sigma_{\min}^2(Q) + \sigma_{\min}^2(P))\\ &= 2\mathcal{L}(P,Q)\cdot(\sigma_{\min}^2(Q) + \sigma_{\min}^2(P)),\nonumber
                    % \end{split}
                \end{align}}
                which when applied to \eqref{eq:Ldotupbnd} results in
                {\small
                \begin{align}
                    % \small
                    % \begin{split}
                        \dot{\mathcal{L}} &\leq -\|\nabla \mathcal{L}\|_F^2+\|\nabla\mathcal{L}\|_F\left\| \begin{bmatrix}
                        U \\ V
                    \end{bmatrix}\right\|_F\nonumber \\ &\leq -\frac{1}{2}\|\nabla \mathcal{L}\|_F^2+\frac{1}{2}\left\| \begin{bmatrix}
                        U \\ V
                    \end{bmatrix}\right\|_F^2 
                    \\ &\leq - \mathcal{L}(P,Q)\cdot(\sigma_{\min}^2(Q) + \sigma_{\min}^2(P)) + \frac{1}{2}\left\| \begin{bmatrix}
                            U \\ V
                        \end{bmatrix}\right\|^2_F,\nonumber
                    % \end{split}
                    \end{align}}
                completing this proof.
            \end{proof}
            
             \comment{This theorem gives us a nice dissipation inequality for our system. It, however, does not guarantee convergence to our target set whenever our parameter matrices $P$ and $Q$ become rank deficient at the same time. Notice that this problem occurs independently from any uncertainty on the computation of our gradient. This condition is, also, already conservative, since for some values of $m$, $n$ and $k$ one can easily find $\bar{Y}$, and initial conditions $P_0$ and $Q_0$ rank deficient, but for which the system still converges to $\bar{Y} = PQ^\top$. To illustrate these observations, consider the following example

          \begin{example}
                Consider the gradient flow dynamics for the case where $n=m=k=2$, where $\bar{Y} = I_{2}$ and for initial conditions $P_0 = \diag{[1, 0]}$ and $Q_0 = \diag{[0, 1]}$. For this example, the dynamics of the entries of $P$ and $Q$ are decoupled, that is, let $p_{ij}$ be the element of the parameter matrix $P$ at row $i$ and column $j$ (similarly for $Q$) then
                \begin{equation}
                    \dot{p}_{12} = \dot{p}_{21} = \dot{q}_{12} = \dot{q}_{21} = 0
                \end{equation}
                and
                \begin{equation}
                    \begin{split}
                        \dot{p}_{ii} = (\bar{y}_{ii}-p_{ii}q_{ii})q_{ii} \\
                        \dot{q}_{ii} = (\bar{y}_{ii}-p_{ii}q_{ii})p_{ii}
                    \end{split}
                \end{equation}
                for all time $t>t_0$ and $i\in\{1,2\}$. We can interpret this as two decoupled dynamics with $n=m=k=1$ (one for $i=1$ and another for $i=2$) which we call ''scalar case``. In the next Section we show that the scalar case converges to $\bar{y}_{ii} = p_{ii}q_{ii}$ for any initial condition not in the set $p_{ii0}=q_{ii0}$, but one can see that is $p_{11}\neq 0$ then $\dot{q}\neq 0$ even if $q_{11}$ starts at zero, meaning that the system, and consequently our Lyapunov function, progress towards our target set.
                
                A different way to interpret this example is by computing the actual value of $\dot{\mathcal{L}}(P,Q,U,V)$ instead of our given upperbound. Picking $U=V=0$ results in
                \begin{equation}
                    \small
                    \begin{split}
                        \dot{\mathcal{L}}(P_0,Q_0,0,0) &= -\|(I-P_0Q_0^\top)Q_0\|^2-\|(I-P_0Q_0^\top)^\top P_0\|^2 \\&= -2.
                    \end{split}
                \end{equation}
%
                % Note that this conservativeness is present in similar works from the literature. For example, in \cite{min2021explicit}, Theorem 1, the authors provide an exponential convergence guarantee for this same kind of system given by
                % %
                % \begin{equation*}
                %     \mathcal{L}(t)-\mathcal{L}^*\leq \exp{-2\lambda_r(\Sigma_x)ct}(\mathcal{L}(0)-\mathcal{L}^*)
                % \end{equation*}
                % %
                % where, adapting from their notation to the ones in this paper
                % %
                % \begin{equation*}
                %     c :=[\lambda_n(\Lambda(0))]_++[\lambda_m(-\Lambda(0)]_+,
                % \end{equation*}
                % \begin{equation*}
                %     \Lambda(t) = 
                % \end{equation*}
                %
            \end{example}

            As this example illustrates, simply verifying the lower-bound defined in Theorem \ref{thm:LdotUB} can be very conservative, even for the case where no disturbance is considered. To attempt to circumvent this problem, in the next Section when we look at the simplified case of when $m=n=1$, we find necessary and sufficient conditions for the convergence of our system for the undisturbed case. This knowledge then serves as a guideline for us to characterize the ISS property of the system for the case when disturbances on the estimation of the gradient are present. Notice, however, that some level of conservativeness is inherent to considering generic disturbances, and necessary and sufficient conditions are unlikely to be possible in general.}%end of comment
            
            % Nonetheless, to properly characterize the ISS property of this system we look for a set that is forward invariant under the gradient flow dynamics and guarantees that our parameter matrices are full rank, which can be interpreted as a safe set for our system. To gain the intuition as to what this set would look like in the general case, we studied multiple simplified versions of the problem.

            This Theorem gives us a sufficient criteria for assuring convergence of our system to the target set, however it depends on us being able to lower-bound, a priori, the singular values of our parameter matrices along a trajectory. We then look into ways of lower bounding the quantity $\sigma_{\min}^2(Q)+\sigma_{\min}^2(P)$ as a function of our initialization in order to properly characterize the ISS property of this system for the simplified case where $n=m=1$. To obtain that lower-bound we study the undisturbed case and identify necessary and sufficient conditions for its convergence to the target set, which we use as guidelines in order to obtain a bound on the maximum admissible disturbance signal.
        
    \section{The Scalar and Vector Cases ($n=m=1, k\geq 1$)}
        \label{sec:scalarnvector}

        We assume along this section that $n=m=1$, that the scalar $\bar Y$ is nonegative (all results are still valid if otherwise, but the characterization of stable and unstable sets swap), and that $P$ and $Q$ are row vectors in general (in the particular case where $k=1$ they are scalars). For this simplified version of the problem, one can verify that the undisturbed dynamics of the parameters of the system is a nonlinear reparametrization of a linear dynamics, that is
        \begin{equation}
            \label{eq:vecgradflw}
            \normalsize
            \begin{bmatrix}
                    \dot{P} \\ \dot{Q}
            \end{bmatrix} = (\bar{Y}-PQ^\top)\begin{bmatrix}
                0 & 1 \\ 1 & 0
            \end{bmatrix}\begin{bmatrix}
                    P \\ Q
            \end{bmatrix},
        \end{equation}
        since $\bar{Y}-PQ^\top$ is a scalar function. This means that the trajectories on the state-space will look the same as the linear system $\dot{P} = Q$ and $\dot{Q} = P$ with an extra stable set whenever $\bar{Y}-PQ^\top=0$ (our target set) and a change in direction if $\bar{Y}-PQ^\top<0$. For the scalar case, we can draw the phase plane of our system, as in Fig. \ref{fig:scalarpp}. To formalize this conjecture, we linearize the system around the origin, which results in
        \begin{figure}
            \centering
            \begin{tikzpicture}[scale=1.2]
                % \pgflowlevelsynccm
                \begin{axis}[
                    xmin = -3, xmax = 3,
                    ymin = -3, ymax = 3,
                    zmin = 0, zmax = 1,
                    axis equal image,
                    view = {0}{90},
                    xtick = {-3,-2,...,3},
                    ytick = {-3,-2,...,3},
                    xlabel={\footnotesize $P$},
                    ylabel={\footnotesize $Q$},
                ]
                    \addplot3[
                        quiver = {
                            u = {(1-x*y)*y/(1.2*abs(1-x*y)*sqrt(x^2+y^2))},
                            v = {(1-x*y)*x/(1.2*abs(1-x*y)*sqrt(x^2+y^2))},
                            scale arrows = 0.25,
                        },
                        -stealth,
                        domain = -3:3,
                        domain y = -3:3,
                        thin,
                        color=gray!50!white,
                    ] {0};
                    
                    \addplot[blue,domain=-3:-1/3] {1/(x)};
                    \addplot[blue,domain=1/3:3] {1/(x)};
                    % \addplot[gray,domain=-3:3] {(x)};
                    % \addplot[gray,domain=-3:3] {(-x)};
                    \addplot[red,dashed,domain=-3:-1.5/3] {1.5/(x)};
                    \addplot[red,dashed,domain=-3:-1.5/9] {1.5/(3*x)};
                    \addplot[red,dashed,domain=1.5/3:3] {1.5/(x)};
                    \addplot[red,dashed,domain=1.5/9:3] {1.5/(3*x)};
                    \addplot[green!60!black,dashed,domain=-3:3] {0.5-x};
                    \addplot[green!60!black,dashed,domain=-3:3] {-0.5-x};
                    
                    \draw plot[black,thin,smooth] file {tikzplotfiles/Solution1_n3p3.txt};
                    \draw plot[black,thin,smooth] file {tikzplotfiles/Solution2_p3n3.txt};
                    \draw plot[black,thin,smooth] file {tikzplotfiles/Solution3_n01n01.txt};
                    \draw plot[black,thin,smooth] file {tikzplotfiles/Solution4_p01p01.txt};
                    \draw plot[black,thin,smooth] file {tikzplotfiles/Solution5_n3n3.txt};
                    \draw plot[black,thin,smooth] file {tikzplotfiles/Solution6_p3p3.txt};
                    \draw plot[black,thin,smooth] file {tikzplotfiles/Solution7_p2p3.txt};
                    \draw plot[black,thin,smooth] file {tikzplotfiles/Solution8_p3p15.txt};
                    \draw plot[black,thin,smooth] file {tikzplotfiles/Solution9_n3p05.txt};
                    \draw plot[black,thin,smooth] file {tikzplotfiles/Solution10_n1n3.txt};
                    \draw plot[black,thin,smooth] file {tikzplotfiles/Solution11_p2n3.txt};
                    \draw plot[black,thin,smooth] file {tikzplotfiles/Solution12_n12p3.txt};
                    \draw plot[black,thin,smooth] file {tikzplotfiles/Solution13_n3p24.txt};
                    \draw plot[black,thin,smooth] file {tikzplotfiles/Solution14_p3n17.txt};
                    % \addplot[color=red]{exp(x)}
                \end{axis}
            \end{tikzpicture}

            \caption{Phase plane for the scalar case of the gradient flow dynamics. The trajectories followed by our solutions are the same as the 2D saddle, except for the inclusion of a new stable set whenever our nonlinear reparametrization $(\bar{Y}-PQ)=0$ and a change in the direction of the trajectories when $(\bar{Y}-PQ)<0$. In the figure there are highlited a couple of different solutions for the system (in blue) as well as the borders of two possible invariant sets (black and pink) that guarantee for initial conditions in them that the system converges to the target set, given sufficiently bound disturbances $U$ and $V$.}
            \label{fig:scalarpp}
        \end{figure}
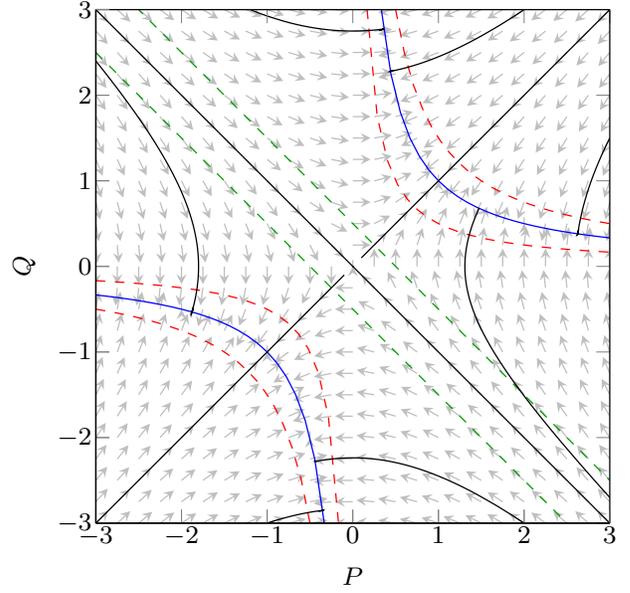
        \begin{equation}
            \normalsize
            \begin{bmatrix}
                    \dot{P} \\ \dot{Q}
            \end{bmatrix} = A_{\text{lin}}\left(P,Q\right),
        \end{equation}
        where $A_\text{lin}:\mathbb{R}^{2\times k}\rightarrow\mathbb{R}^{2\times k}$ is a linear operator on a matrix space. To write this in the familiar state space form we vectorize both sides of the equation, resulting in:
        \begin{equation}
            \small
            \textbf{vec}\begin{pmatrix}
                \dot{P} \\ \dot{Q}
            \end{pmatrix} = \left(I_{k}\otimes \begin{bmatrix}
                    0 & \bar{Y} \\ \bar{Y} & 0
            \end{bmatrix}\right)\textbf{vec}\begin{pmatrix} 
                P \\ Q
            \end{pmatrix} = \bar{A}_\text{lin}\textbf{vec}\begin{pmatrix} 
                P \\ Q
            \end{pmatrix}.
        \end{equation}
        
        One can verify that $\bar{A}_\text{lin}$ has eigenvalues $+\bar{Y}$ and $-\bar{Y}$ with multiplicity $k$, and that a orthogonal basis of eigenvectors associated with the positive (resp. negative) eigenvalues is given by $\{e_i\otimes [1, ~1]^\top\}_{i=1}^k$ (resp, $\{e_i\otimes [-1, ~1]^\top\}_{i=1}^k$). We then associate the linearization of our system around zero with its global (nonlinear) behavior in the following Lemma.

        \begin{proposition}
            For any initial condition $[P_0;Q_0]$ such that 
            $$\vect{\begin{bmatrix}
                P_0 \\ Q_0
            \end{bmatrix}}\in\mathcal{S}^-:=\mbox{span}\left(\left\{e_i\otimes \begin{bmatrix}
                -1 \\ 1
            \end{bmatrix}\right\}_{i=1}^k\right),$$
            the solution of \eqref{eq:vecgradflw} converges to the saddle point $[P(t);Q(t)]=0$. On the other hand, for every initial condition $[P_0;Q_0]$ such that $\vect{[P_0;Q_0]}\not\in\mathcal{S}^-$, the solution of \eqref{eq:vecgradflw} converges to the target set $\target$.
        \end{proposition}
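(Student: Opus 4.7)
The plan is to diagonalize the linearization of \eqref{eq:vecgradflw} via the symmetric/antisymmetric change of coordinates $W := P+Q$, $Z := P-Q$, reducing the $2k$-dimensional problem to two scalar conservation laws. Adding and subtracting the two rows of \eqref{eq:vecgradflw} gives the decoupled pair
\begin{equation*}
    \dot W \;=\; \alpha\, W, \qquad \dot Z \;=\; -\alpha\, Z,
\end{equation*}
where $\alpha(t) := \bar{Y} - P(t)Q(t)^\top$ is a scalar. Writing $u := \|W\|_2^2$ and $v := \|Z\|_2^2$, a short computation gives $PQ^\top = \tfrac{1}{4}(u-v)$, so $\alpha = \bar{Y} - \tfrac{1}{4}(u-v)$, and
\begin{equation*}
    \dot u \;=\; 2\alpha\, u, \qquad \dot v \;=\; -2\alpha\, v.
\end{equation*}
The key observation is that $\tfrac{d}{dt}(uv) = \dot u\, v + u\, \dot v = 0$, so $u(t)v(t) \equiv u_0 v_0 =: c \geq 0$ is a conserved quantity for all trajectories.

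For the first claim, I would note that $\vect{\begin{bmatrix}P_0 \\ Q_0\end{bmatrix}} \in \mathcal{S}^-$ is equivalent to $P_0 + Q_0 = 0$, i.e.\ $W_0 = 0$. Since $\dot W = \alpha W$ is linear in $W$, uniqueness forces $W(t) \equiv 0$, hence $u \equiv 0$ and $\alpha = \bar{Y} + v/4 \geq 0$. The scalar inequality $\dot v = -2(\bar{Y} + v/4)\, v \leq -v^2/2$ then yields $v(t) \to 0$, so $Z \to 0$ and, combined with $W \equiv 0$, one concludes $P,Q \to 0$.

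For the second claim, $\vect{\begin{bmatrix}P_0 \\ Q_0\end{bmatrix}} \notin \mathcal{S}^-$ means $W_0 \neq 0$, i.e.\ $u_0 > 0$. Substituting $v = c/u$ into $\dot u = 2\alpha u$ yields the autonomous scalar ODE
\begin{equation*}
    \dot u \;=\; -\tfrac{1}{2}u^2 + 2\bar{Y}\, u + \tfrac{c}{2},
\end{equation*}
whose right-hand side is a downward-opening parabola with a unique positive root $u^{*} = 2\bar{Y} + \sqrt{4\bar{Y}^2 + c}$. Since the right-hand side is strictly positive on $(0,u^{*})$ and strictly negative on $(u^{*},\infty)$, $u(t) \to u^{*}$ monotonically. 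A direct calculation gives $u^{*} - c/u^{*} = 4\bar{Y}$, hence $\alpha \to 0$, i.e.\ $PQ^\top \to \bar{Y}$, which is exactly convergence to $\target$.

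The main effort is identifying the change of variables that simultaneously decouples the linearization at the origin and exposes the nonlinear conservation law $uv = \text{const}$; once this structure is in hand, the convergence analysis collapses to an elementary phase-line argument on a single scalar ODE. The only remaining subtlety is verifying that $u$ cannot escape $(0,\infty)$ in the nontrivial case, which is automatic since $u=0$ is an equilibrium of the reduced ODE when $c=0$ and satisfies $\dot u|_{u=0} = c/2 > 0$ when $c>0$, so $u$ stays strictly positive for all $t \geq 0$.
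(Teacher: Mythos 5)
Your proposal is correct, and after the shared first step it diverges from the paper's argument in a meaningful way. Both proofs begin with the same symmetric/antisymmetric change of coordinates (your $W=P+Q$, $Z=P-Q$ are, up to a factor of $2$, exactly the paper's coefficients $a,b$ in the eigenvector basis of $\bar A_{\rm lin}$ at the origin), and both reduce to the planar system $\dot{u}=2\alpha u$, $\dot{v}=-2\alpha v$ with $\alpha = \bar Y - (u-v)/4$, which matches the paper's $\dot{\bar a}=2F\bar a$, $\dot{\bar b}=-2F\bar b$ after rescaling $u=4\bar a$, $v=4\bar b$. The difference is in how that planar system is analyzed. The paper does not notice the first integral: it instead proves that the sets $\{F>0\}$, $\{F=0\}$, $\{F<0\}$ are each invariant, then argues casewise by monotonicity that $\bar a$ is increasing and bounded and $\bar b$ is decreasing and nonnegative (or vice versa), so both converge, and that the only possible limit has $F=0$. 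You instead observe the conservation law $\tfrac{d}{dt}(uv)=0$, substitute $v=c/u$, and collapse everything to a single autonomous scalar ODE $\dot u = -\tfrac12 u^2 + 2\bar Y u + \tfrac{c}{2}$ whose phase line is elementary and whose stable fixed point $u^{*}=2\bar Y+\sqrt{4\bar Y^2+c}$ you can write in closed form. This buys you a shorter argument that treats $F>0$ and $F<0$ uniformly (no case split), pins down the exact limit values of $u,v$, and makes positive invariance of $\{u>0\}$ trivial (either $u=0$ is a rest point of the reduced ODE, or $\dot u|_{u=0}=c/2>0$). The paper's route buys slightly less machinery but costs a boundedness argument and a separate treatment of the two signs of $F$. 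Both implicitly pass from convergence of $\|W\|^2,\|Z\|^2$ to convergence of $(P,Q)$ themselves (justified because $\dot W=\alpha W$ preserves the direction of $W$, and likewise for $Z$), so neither is more complete than the other on that minor point.
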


        \begin{proof}
            %As a reminder, we are assuming, without loss of generality, that $\bar Y\geq0$. 
            %
            Let $P:\mathbb{R}_+\rightarrow\mathbb{R}^{1\times k}$ and $Q:\mathbb{R}_+\rightarrow\mathbb{R}^{1\times k}$ be such that $Z=[P;Q]:\mathbb{R}_+\rightarrow\mathbb{R}^{2\times k}$ is a solution of \eqref{eq:vecgradflw} and define $v(t):=\vect{Z(t)}$. Notice that we can write $v$ as a function of the basis of eigenvectors of the linearization of \eqref{eq:vecgradflw} around the origin as
            \begin{equation}
                v(t) = \sum_{i=1}^k a_i(t)\cdot e_i\otimes \begin{bmatrix}
                    1 \\ 1
                \end{bmatrix} + b_i(t)\cdot e_i\otimes \begin{bmatrix}
                    -1 \\ 1
                \end{bmatrix},
            \end{equation}
            where $a_i(t)$ and $b_i(t)$ are differentiable scalar functions of time, since the projection into a static basis of vectors is a time-invariant linear mapping. Define $a(t) = [a_1(t), ~a_2(t), ~\dots, ~a_k(t)]^\top$ and similarly for $b(t)$ (we will mostly omit any explicit time dependencies until the end of this proof whenever it is clear from context), and notice that by using the bilinearity and associativity properties of the Kronecker product we can write
            \begin{equation*}
                v = a\otimes\begin{bmatrix}
                    1 \\ 1
                \end{bmatrix} + b\otimes \begin{bmatrix}
                    -1 \\ 1
                \end{bmatrix}.
            \end{equation*}
            
            Applying the inverse vectorization operator on both sides gives
            \begin{align}
                \vectinv{v} &= \vectinv{a\otimes\begin{bmatrix}
                    1 \\ 1
                \end{bmatrix} + b\otimes \begin{bmatrix}
                    -1 \\ 1
                \end{bmatrix}}\nonumber \\ &= \vectinv{a\otimes\begin{bmatrix}
                    1 \\ 1
                \end{bmatrix}} + \vectinv{b\otimes \begin{bmatrix}
                    -1 \\ 1
                \end{bmatrix}},
            \end{align}
            which after applying \eqref{eq:kronprodprop} to both terms of the right hand side of the equation gives
            {\small
            \begin{align}
                \vectinv{v} = \begin{bmatrix}
                    P \\ Q
                \end{bmatrix} = \begin{bmatrix}
                    1 \\ 1
                \end{bmatrix} a^\top + \begin{bmatrix}
                    -1 \\ 1
                \end{bmatrix} b^\top = \begin{bmatrix}
                    a^\top - b^\top \\ a^\top + b^\top
                \end{bmatrix}.
            \end{align}}

            Applying this equation to \eqref{eq:vecgradflw} gives 
            \begin{equation}
                \begin{split}
                    \begin{bmatrix}
                        \dot a^\top - \dot b^\top \\ \dot a^\top + \dot b^\top
                    \end{bmatrix} = (\bar Y -\|a\|_2^2+\|b\|_2^2)\begin{bmatrix}
                        a^\top+b^\top \\ a^\top-b^\top
                    \end{bmatrix}.
                \end{split}
            \end{equation}
            
            To simplify our notation we define $\bar a := \|a\|_2^2$, $\bar b := \|b\|_2^2$, and $F(\bar{a},\bar{b}) := (\bar Y - \bar{a}+\bar{b})$. With this and after some manipulation we get
            \begin{equation}
                \begin{split}
                    \dot a &= F(\bar a,\bar b)a \\
                    \dot b &= -F(\bar a,\bar b)b,
                \end{split}
            \end{equation}
            which in turn implies
            \begin{equation}
                \begin{split}
                    \dot{\bar{a}} &= 2F(\bar a,\bar b)\bar{a} \\
                    \dot{\bar{b}} &= -2F(\bar a,\bar b)\bar{b}.
                \end{split}
            \end{equation}
            To understand the global behavior of the system, first notice that the sets of $P$ and $Q$ such that $F(\bar a(P,Q), \bar b(P,Q))>0$ and $F(\bar a(P,Q), \bar b(P,Q))<0$ are invariant. To prove that notice that if there exists a solution $Z = [P ; Q]$ of \eqref{eq:vecgradflw} such that at $t_1$, $F(\bar a(P(t_1),Q(t_1)), \bar b(P(t_1),Q(t_1))) <0$ and at $t_2>t_1$, $F(\bar a(P(t_2),Q(t_2)), \bar b(P(t_2),Q(t_2)))>0$ then there must exist $\bar t$$ \in $$(t_1,t_2)$ such that $F(\bar a(P(\bar t),Q(\bar t)), \bar b(P(\bar t),Q(\bar t)))=0$. However, that is a contradiction, since at any point such that $F(\bar a, \bar b)=0$, $\dot{\bar{a}} = 0$ and $\dot{\bar{b}} = 0$, and sincell the solutions of our ODE are unique.

            With that established, we can consider the behavior of our system in each of those sets independently. First consider $F(\bar a, \bar b)>0$. We need to treat two different cases for this set: one where $\bar a\equiv0$; and another where $a\neq 0$ for all $t$. First notice that if at any $t$, $\bar a(t)=0$ then it must be zero for all $t$, by a similar argument as the one we used to justify invariance of $F(\bar a, \bar b)>0$. Second, if $\bar a\equiv 0$ then the dynamics of $\bar b$ simplify to 
            \begin{equation}
                \dot{\bar{b}} = -2(\bar Y+\bar b)\bar b,
            \end{equation}
            which can easily be shown to always converge to zero (since $\bar b>0$) by checking the Lyapunov Function $V(\bar b) = \bar b^2$. Therefore, if $a\equiv 0$, then $F(\bar a, \bar b)>0$ and $b\rightarrow 0$. Furthermore, notice that $a$ is the vector obtained by projecting $v$ into the span of the unstable eigenvectors of our linearization, which means that $\bar a = 0 $$\iff$$ a=0 $$\iff $$v=\vect{[P;Q]}\in\mathcal{S}^-$. This, therefore, proves the first statement of the theorem, which is that for any solution starting in $\mathcal{S}^-$, the system converges to the saddle point. 

            For the case where $F(\bar a, \bar b)>0$ and $\bar a\neq 0$, notice that $\dot{\bar a} >0$ and $\dot{\bar b} \leq 0$ for all $t$. From this and the fact that $\bar b\geq 0$ we conclude that as $t\rightarrow \infty$, $\bar b\rightarrow \bar b_{ss}\geq 0$, since it is non-increasing. This, together with the fact that $F(\bar a,\bar b)>0$ for all $t$ implies that $\bar a$ is bounded above, since if it was not, then $F(\bar a, \bar b)$ would become negative at some time since $\bar b$ is non-increasing. Because $\bar a$ is bounded above and strictly increasing, it must converge to some $\bar a_{ss}>0$. Finally, because at $(\bar a_{ss}, \bar b_{ss})$ the system is at an equilibrium point, and $\bar a_{ss}$ is nonzero, that can only happen if $F(\bar a_{ss}, \bar b_{ss})=0$.

            At this point we bring attention to the fact that if $P=a^\top-b^\top$ and $Q = a^\top+b^\top$, then $\bar Y-PQ^\top = \bar Y-\bar a + \bar b = F(\bar a, \bar b)$. This means that converging to a point where $F(\bar a, \bar b)=0$ is equivalent to converging to a point in the target set.

            %Therefore, we have shown that for the undisturbed vector case that if we start our system in $\mathcal{S}^-$ (that is, with $\bar a=0$) we remain in that set and converge to the equilibria at $[P;Q]=0$. However, if we initialize at any point outside that set, we can never reach $\mathcal{S}^-$ and will, instead, converge to $\target$ (equivalent in this case to $F(\bar a, \bar b)=0$). 

            For the case where $F(\bar a, \bar b)<0$, first notice that $a\neq 0$ necessarily, since if $a=0$, then $\bar a=0$ which would imply that $F(0,\bar b) = \bar Y + \bar b$ could never be negative. The rest of the argument follows analogously as the case where $F(\bar a, \bar b)>0$ and $a\neq 0$ and is, thus, omitted due to space.

            With this we show that as long as $a\neq 0$, the system always converges to a point where $F(\bar a, \bar b)=0$ which is equivalent to say that it converges to a point in our target set.
        \end{proof}
        
        \comment{\begin{proof}
            Define
            $$\mathcal{S}^+:=\mbox{span}\left(\left\{e_i\otimes \begin{bmatrix}
                1 \\ 1
            \end{bmatrix}\right\}_{i=1}^k\right)$$
            
            Consider an arbitrary vector $v\in\mathbb{R}^{2k}$ and decompose it in the basis of eigenvectors of the linearization of our system around the origin as
            \begin{equation*}
                v = \sum_{i=1}^k a_{i}\cdot e_i\otimes\begin{bmatrix}
                    1 \\ 1
                \end{bmatrix} + b_{i}\cdot e_i\otimes\begin{bmatrix}
                    -1 \\ 1
                \end{bmatrix} 
            \end{equation*}
            where $a_i$ and $b_i$ are some scalars and $a = [a_1, \dots, a_k]$, $b = [b_1,\dots, b_k]$. From decomposing $v$ this way one can conclude that $v\in\mathcal{S}^- \iff a_i=0, ~\forall i\in[1,k]$, furthermore, we can write
            \begin{equation*}
                \begin{split}
                    v &= \sum_{i=1}^kb_i\cdot e_i\otimes\begin{bmatrix}
                        -1 \\ 1
                    \end{bmatrix} = b\otimes\begin{bmatrix}
                    -1 \\ 1 
                    \end{bmatrix} \\&\iff \vectinv{v} = \begin{bmatrix}
                        P \\ Q
                    \end{bmatrix} = \begin{bmatrix}
                        -b^\top \\ b^\top
                    \end{bmatrix}
                \end{split}
            \end{equation*}

            Inputing $P=-b$ and $Q=b$ into \eqref{eq:vecgradflw} gives
            \begin{equation*}
                \begin{bmatrix}
                    -\dot b^\top \\ \dot b^\top
                \end{bmatrix} = (\bar Y+b^\top b)\begin{bmatrix}
                    b^\top \\ -b^\top
                \end{bmatrix} \rightarrow \dot b = -(\bar Y+\|b\|_2^2)b
            \end{equation*}
            which converges to $b=0$.

            To prove the second case of the theorem, consider the general form of $v$ where $a_i$ is not necessarily zero and write
            \begin{equation*}
                \begin{split}
                    v &= a\otimes\begin{bmatrix}
                        1 \\ 1
                    \end{bmatrix} + b\otimes \begin{bmatrix}
                        -1 \\ 1
                    \end{bmatrix} \\
                    \mbox{vec}^{-1}(v) &= \begin{bmatrix}
                        a^\top-b^\top \\ a^\top+b^\top
                    \end{bmatrix}
                \end{split}
            \end{equation*}
            which implies that
            \begin{equation}
                \begin{split}
                    \begin{bmatrix}
                        \dot a^\top - \dot b^\top \\ \dot a^\top + \dot b^\top
                    \end{bmatrix} = (\bar Y -\|a\|_2^2+\|b\|_2^2)\begin{bmatrix}
                        a^\top+b^\top \\ a^\top-b^\top
                    \end{bmatrix}
                \end{split}
            \end{equation}
            which after some manipulation gives
            \begin{equation}
                \begin{split}
                    \dot a &= (\bar Y - \|a\|_2^2+\|b\|_2^2)a \\
                    \dot b &= -(\bar Y - \|a\|_2^2+\|b\|_2^2)b
                \end{split}.
            \end{equation}

            Notice that if $F(a,b) = \bar Y -\|a\|_2^2+\|b\|_2^2<0$ then $\frac{d}{dt} \|a\|_2^2 = F(a,b)\|a\|_2^2<0$ and $\frac{d}{dt}\|b\|_2^2\geq0$ which means that $\frac{d}{dt}F(a,b)>0$, which implies that the parameters eventually converges to some $a$ and $b$ such that $F(a,b)=0$, but $PQ^\top = \|a\|_2^2-\|b\|_2^2$, therefore $F(a,b) = 0 \iff \bar Y = PQ^\top$. If, however, $F(a,b)>0$ and $a\neq 0$ then $\frac{d}{dt} \|a\|_2^2>0$ and $\frac{d}{dt} \|b\|_2^2 \leq 0$, which means that $\frac{d}{dt}F(a,b)<0$ and implies that, again, our system eventually converges to some $a$ and $b$ such that $F(a,b)=0$, which implies $\bar Y = PQ^\top$.

            if $F(a,b)$ 
        \end{proof}}%end of comment
        
        %Notice that for this vector case, any initial condition strictly inside the span of the stable eigenvectors (i.e. the eigenvectors of $\bar{A}_\text{lin}$ associated with the negative eigenvalues) converges to the saddle point $P=0$ and $Q=0$, while any initial condition with a component on the span of the unstable eigenvectors eventually converges to $\bar{Y}=PQ^\top$. 
        
        \begin{remark}
            For the scalar case, $\mathcal{S}^-$ and $\mathcal{S}^+$ (defined the same as $\mathcal{S}^-$ but for the unstable eigenvectors) correspond to the lines $P-Q=0$ and $P+Q=0$ respectively, as can be seen in Fig. \ref{fig:scalarpp}. The condition that $F(\bar a,\bar b)<0$ can be understood as being to the southwest of the lower hyperbola or to the northeast of the higher hyperbola, while $F(\bar a,\bar b)>0$ is the region in between the two hyperbolas. As mentioned before, the set where $[P;Q]$ is spanned by the eigenvectors of our linearization associated to the negative eigenvalues is the line $P+Q=0$ and is the only set in the state space that converges to the saddle point $[P;Q]=0$.             In Fig. \ref{fig:scalarpp}, the dashed green lines are defined by the equation $|P+Q|=k$ which intuitively measure the magnitude of our projection in $\mathcal{S}^+$, while the dashed red lines are defined by $PQ=k$ which in some sense measures the distance between a point and our target set along our vector field. While both $|P+Q|>k$ and $PQ>k$ can be shown to be forward invariant, we focus on the green lines for this section, in hopes of being able to more easily generalize this set for the general case in the future. 
        \end{remark}
        
        Considering the minimum distance to $\mathcal{S}^-$ (that is the norm of the projection into $\mathcal{S}^+$) as a possible invariant set for our problem gives
        {\small
        \begin{equation}
            \normalsize
            \left\|\textbf{vec}^{-1}\left(\left(I_k\otimes \begin{bmatrix}
                    1 \\ 1
            \end{bmatrix}\right)^\top \cdot \vect{\begin{bmatrix} 
                P \\ Q
            \end{bmatrix}}\right)\right\|_2^2 = \|P+Q\|_2^2.
        \end{equation}}
        
        For some $\alpha>0$, define the set
        \begin{equation}
            \normalsize
            \mathcal{R}_\alpha = \{P,Q\in\mathbb{R}^n ~|~ \|P+ Q\|_2^2\geq \alpha^2\},
        \end{equation}
        for which we can state the following theorem:
        
        \begin{theorem}
            \label{thm:InvSet}
            For any $\alpha\in [0,2\sqrt{\bar{Y}})$, the set $\mathcal{R}_\alpha$ is forward-invariant under the gradient flow dynamics if
            \begin{equation}
            \normalsize
                 \|U\|_2+\|V\|_2\leq \frac{1}{\sqrt{2}}|\alpha|(\bar{Y}-\frac{\alpha^2}{4}).
            \end{equation}
            
            Moreover, if $(P,Q)\in\mathcal{R}_\alpha$, then  $PP^\top + QQ^\top = \sigma(P)^2+\sigma(Q)^2 \geq \alpha^2/2$. %Since $P$ and $Q$ are vectors in this case, $\sigma(P) = \|P\|_2 = \sqrt{PP^\top}$ (similar for $Q$). This means that $\sigma_{\min}^2(P)+\sigma_{\min}^2(Q) = PP^\top+QQ^\top > \alpha^2/2$ if $(P,Q)\in\mathcal{R}_\alpha$
    
      \end{theorem}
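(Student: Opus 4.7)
The plan is to prove the forward-invariance of $\mathcal{R}_\alpha$ by a Nagumo-style boundary argument for the scalar candidate $V(P,Q) := \|P+Q\|_2^2$: I will show that $\dot V \geq 0$ whenever $\|P+Q\|_2 = |\alpha|$, which prevents trajectories from crossing into the complement of $\mathcal{R}_\alpha$. The singular-value lower bound is a separate, purely algebraic consequence of the parallelogram identity.

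For the invariance step, I would first differentiate $V$ along the perturbed flow \eqref{eq:ssgradflowpert}. Because $\bar Y - PQ^\top$ is a scalar in the $n=m=1$ regime, summing the two equations in \eqref{eq:ssgradflowpert} gives $\dot P + \dot Q = (\bar Y - PQ^\top)(P+Q) + U + V$, hence
\begin{equation*}
\dot V = 2(\bar Y - PQ^\top)\|P+Q\|_2^2 + 2(P+Q)(U+V)^\top.
\end{equation*}
The key algebraic step is the polarization identity $4\,PQ^\top = \|P+Q\|_2^2 - \|P-Q\|_2^2$, which on the boundary $\|P+Q\|_2 = |\alpha|$ reduces to $\bar Y - PQ^\top = (\bar Y - \alpha^2/4) + \tfrac{1}{4}\|P-Q\|_2^2 \geq \bar Y - \alpha^2/4$. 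The hypothesis $\alpha < 2\sqrt{\bar Y}$ makes this lower bound strictly positive, so the ``drift'' term points outward. A Cauchy--Schwarz estimate on the boundary yields $|(P+Q)(U+V)^\top| \leq |\alpha|\,\|U+V\|_2$, which after bounding $\|U+V\|_2$ in terms of $\|U\|_2$ and $\|V\|_2$ (triangle inequality, possibly combined with a parallelogram-style bound $\|U+V\|_2 \leq \sqrt{2}\,\sqrt{\|U\|_2^2+\|V\|_2^2}$ to absorb the stated $\tfrac{1}{\sqrt 2}$ factor) lets one collect everything into
\begin{equation*}
\dot V \;\geq\; 2\alpha^2\bigl(\bar Y - \tfrac{\alpha^2}{4}\bigr) \;-\; 2|\alpha|\cdot(\text{disturbance term}),
\end{equation*}
which is nonnegative exactly under the hypothesized disturbance bound.

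For the singular-value inequality, I would use that in the scalar case $P$ and $Q$ are row vectors, whose unique nonzero singular value equals their Euclidean norm, so $PP^\top = \sigma(P)^2 = \|P\|_2^2$ and similarly for $Q$. The parallelogram identity $\|P+Q\|_2^2 + \|P-Q\|_2^2 = 2(\|P\|_2^2 + \|Q\|_2^2)$ then gives $\|P\|_2^2 + \|Q\|_2^2 \geq \tfrac{1}{2}\|P+Q\|_2^2 \geq \alpha^2/2$ for every $(P,Q) \in \mathcal{R}_\alpha$, completing the statement.

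The main technical subtlety is that the drift coefficient $\bar Y - PQ^\top$ is not sign-definite in general (the dynamics reverse direction across the hyperbolas $PQ^\top = \bar Y$, as visible in Fig.~\ref{fig:scalarpp}), so it is essential that the argument only needs to hold on the boundary of $\mathcal{R}_\alpha$, where the polarization identity forces $\bar Y - PQ^\top \geq \bar Y - \alpha^2/4 > 0$. The restriction $\alpha \in [0, 2\sqrt{\bar Y})$ is exactly what places the boundary of the candidate invariant set inside the region where the drift aligns with the outward normal of $\mathcal{R}_\alpha$; once this is secured, the rest of the proof is a sequence of inner-product inequalities.
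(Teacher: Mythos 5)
Your proposal is correct and takes essentially the same route as the paper: compute $\tfrac{d}{dt}\|P+Q\|_2^2$ along the perturbed flow, note the boundary condition $\|P+Q\|_2^2=\alpha^2$ forces $\bar Y - PQ^\top \geq \bar Y - \alpha^2/4 > 0$ (the paper states $\alpha^2 \geq 4PQ^\top$ directly where you invoke the polarization identity, but it is the same fact), and then push the disturbance contribution through Cauchy--Schwarz and the bound $\|U+V\|_2 \leq \sqrt{2}(\|U\|_2+\|V\|_2)$; the final inequality and the $\sigma^2$ lower bound via $\alpha^2 = \|P+Q\|_2^2 \leq 2\|P\|_2^2 + 2\|Q\|_2^2$ are identical. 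The only place where you hedge is in how to pass from $\|U+V\|_2$ to $\|U\|_2+\|V\|_2$: the chain $\|U+V\|_2 \leq \sqrt{2(\|U\|_2^2+\|V\|_2^2)} \leq \sqrt{2}(\|U\|_2+\|V\|_2)$ is exactly what the paper uses and delivers the stated $\tfrac{1}{\sqrt 2}$, so your ``possibly combined with'' instinct is the right one.
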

      
      \begin{proof}
            A point in the border of $\mathcal{R}_\alpha$ is such that $\|P+Q\|_2^2=\alpha^2$. At such point, the vector field under uncertain gradient flow dynamics is given by:
            {
            \begin{align}
                % \begin{split}
                    &~\frac{d}{dt}\|P(t)+Q(t)\|_2^2 \nonumber \\ =&~\frac{d}{dt}\left((P(t)+Q(t))(P(t)+Q(t))^\top\right) \nonumber \\ =& ~ \frac{d}{dt}(P P^\top + 2P Q^\top + Q Q^\top) \nonumber \\ =&~ 2(\bar{Y}-P Q^\top)(P P^\top + 2P Q^\top + Q Q^\top)\\&~+2(P U^\top + P V^\top + Q U^\top + Q V^\top) \nonumber \\ =&~ 2(\bar{Y}-P Q^\top)(P P^\top + 2P Q^\top + Q Q^\top)\nonumber \\&~+2(P+Q)(U+V)^\top \nonumber \\ \geq&~ 2(\bar{Y}-P Q^\top)\|P+Q\|_2^2-2\|P+Q\|_2\|U+V\|_2 \nonumber.
                % \end{split}
            \end{align}}
            
            For $\mathcal{R}_\alpha$ to be invariant, we need the vector field at any point of its border ''not to point to outside of the set``. For $\mathcal{R}_\alpha$ this means that $d/dt \|P+Q\|_2^2>0$ at any point of the border. Since $\|P+Q\|_2$ is always nonegative, it is easy to see that the above lowerbound is nonegative if
            \begin{equation}
                \label{eq:ubndnonsym}
                \|U+V\|_2\leq(\bar{Y}-P Q^\top)\|P+Q\|_2.
            \end{equation}
            
            Notice that we can write an upper-bound for the left side of \eqref{eq:ubndnonsym} as
            \begin{equation}
                \|U+V\|_2\leq\sqrt{2\|U\|_2^2+2\|V\|_2^2} \leq \sqrt{2}(\|U\|_2+\|V\|_2),
            \end{equation}
            and a lower-bound for the right side as
            \begin{equation}
                (\bar{Y}-P Q^\top)\|P+Q\|_2 \geq \alpha(\bar{Y}-\frac{\alpha^2}{4}),
            \end{equation}
            since at any point at the border of $\mathcal{R}_\alpha$, $\|P+Q\|_2^2=\alpha^2\geq 4P Q^\top$, we can state the following sufficient condition for invariance of the set:
            \begin{equation}
                \|U\|_2+\|V\|_2\leq \frac{1}{\sqrt{2}}\alpha(\bar{Y}-\frac{\alpha^2}{4}).
            \end{equation}

            The second statement of the theorem comes from simply computing $\alpha^2=\|P+Q\|_2^2\leq 2\|P\|_2^2+2\|Q\|_2^2$.
        \end{proof}
        
        This theorem allows us to rewrite the previous lower-bound dissipation inequality, for a solution initialized in $\mathcal{R}_\alpha$ for some $\alpha\in [0, 2\sqrt{\bar{Y}})$, as
        \begin{equation}
            \normalsize
            \dot{\mathcal{L}}(P,Q,U,V)\leq -\mathcal{L}(P,Q)\frac{\alpha^2}{2}+\frac{1}{2}\left\| \begin{bmatrix}
                    U \\ V
                \end{bmatrix}\right\|_F^2,
        \end{equation}
        which is strictly negative until we reach the target set, assuming our disturbances respect the bound in Theorem \ref{thm:InvSet} and that our disturbance signals satisfy the conditions on Theorem \ref{thm:InvSet}.
        
    \section{The General Case}

        For the general case, where $m$ and $n$ are arbitrary, we have to deal with a matrix ODE for the dynamics of the parameters. One immediate problem from this is the existence of spurious equilibria besides the origin. For formally showing this, we first state the following intermediate Lemma \ref{lemma:aux1} followed by Theorem \ref{lemma:eqs} which characterize the system's equilibria.

        \begin{lemma}
            \label{lemma:aux1}
            Given two matrices $A\in\mathbb{R}^{p\times o}$ and $B\in\mathbb{R}^{q\times o}$ for $p,q,o\in\mathbb{N}$ with $q\geq o$, the following two statements are equivalent
            
            \begin{enumerate}
                \item $AB^\top=0$;
                \item {There exist orthogonal matrices $\Psi_A$, $\Phi$, and $\Psi_B$, and rectangular diagonal matrices with non-negative diagonal elements
            $\Sigma_A$ and $\Sigma_B$, such that
            \begin{equation}
                \label{eq:auxlemA}
                \begin{split}
                    &A = \Psi_A\Sigma_A\Phi^\top
                \end{split}
            \end{equation}
            and
            \begin{equation}
                \label{eq:auxlemB}
                B = \Psi_B\Sigma_B\Phi^\top
            \end{equation}
            are SVDs of $A$ and $B$, and $\Sigma_A\Sigma_B^\top =0$.}
            \end{enumerate}
            
            Furthermore, in $2)$ we can write $\Sigma_A$ and $\Sigma_B$ as

            \begin{equation*}
                \Sigma_A = \SigBlkI[A]
            \end{equation*}
            and
            \begin{equation*}
                \Sigma_B =\SigBlkF[B]
            \end{equation*}
            where $\bar\Sigma_A$ and $\bar\Sigma_B$ are diagonal matrices whose main diagonal elements are the nonzero singular values of $A$ and $B$ respectively.
        \end{lemma}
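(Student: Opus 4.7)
The direction $(2) \Rightarrow (1)$ is immediate: substituting the given SVDs into $AB^\top$ and using $\Phi^\top \Phi = I$ yields $AB^\top = \Psi_A \Sigma_A \Sigma_B^\top \Psi_B^\top = 0$ by hypothesis on the singular-value factors, so I would dispatch this direction in one line.

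For $(1) \Rightarrow (2)$, the plan is to extract the key geometric content of $AB^\top = 0$: written row-wise, it says that every row of $A$ is orthogonal to every row of $B$, i.e., the row spaces $\mathcal{R}(A^\top)$ and $\mathcal{R}(B^\top)$ are mutually orthogonal subspaces of $\mathbb{R}^o$. Letting $r_A=\rank{A}$ and $r_B=\rank{B}$, this forces $r_A + r_B \leq o$ and lets me partition $\mathbb{R}^o$ as an orthogonal direct sum $\mathcal{R}(A^\top)\oplus\mathcal{R}(B^\top)\oplus W$, where $W$ has dimension $o-r_A-r_B$.

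The next step is to build a single orthonormal basis $\Phi = [\phi_1,\dots,\phi_o]$ of $\mathbb{R}^o$ that respects this partition: choose $\phi_1,\dots,\phi_{r_A}$ to be right singular vectors of $A$ associated with its nonzero singular values (so they span $\mathcal{R}(A^\top)$ and diagonalize $A^\top A$ restricted there), then choose $\phi_{r_A+1},\dots,\phi_{r_A+r_B}$ to be right singular vectors of $B$ associated with its nonzero singular values (spanning $\mathcal{R}(B^\top)$), and finally fill in any orthonormal basis of $W$. Computing $A\Phi$ column by column: the first $r_A$ columns produce $\sigma_{A,i}\psi_{A,i}$ for orthonormal left singular vectors $\psi_{A,i}$, while the remaining $o-r_A$ columns all vanish because they lie in $\mathcal{R}(B^\top)\oplus W \subseteq \ker A$. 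Extending $\{\psi_{A,i}\}_{i=1}^{r_A}$ to an orthonormal basis $\Psi_A$ of $\mathbb{R}^p$ gives exactly $A = \Psi_A \Sigma_A \Phi^\top$ with $\Sigma_A$ in the claimed top-left block form. Applying the same argument to $B$ yields $B\Phi$ whose nonzero columns are only those in positions $r_A+1,\dots,r_A+r_B$, equal to $\sigma_{B,i}\psi_{B,i}$ for orthonormal left singular vectors of $B$.

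The subtle step — and the one I expect to require the most care — is choosing $\Psi_B$ so that $\bar\Sigma_B$ sits in the middle block of $\Sigma_B$ rather than the top-left as in a standard SVD. The remedy is to place the left singular vectors $\psi_{B,i}$ of $B$ as the $(r_A+1)$-th through $(r_A+r_B)$-th columns of $\Psi_B$, padding the first $r_A$ and last $q-r_A-r_B$ columns with any orthonormal extensions; this is legitimate because those columns are multiplied by zero rows of $\Sigma_B$ and so do not affect the factorization. With $\Sigma_A$ and $\Sigma_B$ placed this way, the nonzero blocks of $\Sigma_A$ and $\Sigma_B^\top$ live in disjoint row/column positions, so $\Sigma_A \Sigma_B^\top = 0$ holds by inspection, completing the proof.
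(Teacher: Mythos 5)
Your proposal is correct and follows essentially the same route as the paper: both proofs reduce $AB^\top=0$ to orthogonality of the row spaces of $A$ and $B$, assemble a shared right singular-vector matrix $\Phi$ with blocks spanning $\mathcal{R}(A^\top)$, $\mathcal{R}(B^\top)$, and their orthogonal complement, and then place $\bar\Sigma_B$ in the middle block by shifting the left singular vectors of $B$ into columns $r_A+1,\dots,r_A+r_B$ of $\Psi_B$ (this last step is where $q\geq o\geq r_A+r_B$ is implicitly used). The reverse direction is the same one-line computation.
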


        \begin{proof}
            The proof of $2)$$\Rightarrow$$1)$ follows immediately from writing
            
            $$AB^\top = \Psi_A\Sigma_A\Phi^\top\Phi\Sigma_B^\top \Psi_B^\top = \Psi_A\Sigma_A\Sigma_B^\top\Psi_B^\top = 0.$$
            
            To prove $1)$$\Rightarrow$$2)$, let $a = \rank{A}$ and $b=\rank{B}$, then write $A$ and $B$ as a sum of rank one matrices as follows
            \begin{equation}
                \label{eq:svdA}
                A = \sum_{i=1}^a \psi_{i,A}\phi_{i,A}^\top\sigma_{i,A}
            \end{equation}
            and
            \begin{equation}
                \label{eq:svdB}
                B = \sum_{i=1}^b\psi_{i,B}\phi_{i,B}^\top \sigma_{i,B},
            \end{equation}
            where $\psi_{i,A/B}$ and $\phi_{i,A/B}$ are left and right singular vectors of $A$ and $B$ associated with nonzero singular values, and $\sigma_{i,A/B}$ are the nonzero singular values of $A$ and $B$.

            Notice that $AB^\top=0$ if and only if any vector in the span of the columns of $B^\top$ belongs to the kernel of $A$. Therefore, since $\colspan{B^\top} = \spanv{\{\phi_{i,B}\}_{i=1}^b}$ and $\kernel{A}$ is the orthogonal complement of $\spanv{\{\phi_{i,A}\}_{i=1}^a}$, which is equivalent to $\phi_{i,A}^\top\phi_{j,B}=0$ for all $(i,j)\in \{1,\dots,a\}\times\{1,\dots,b\}$, i.e. $\spanv{\{\phi_{i,A}\}_{i=1}^a}$$\perp$$\spanv{\{\phi_{i,B}\}_{i=1}^b}$. 

            % It is trivial to show that colspan(B) = span(\phi_{i,B}) since phi_{iB}\psi_{iB}^T has columns that are multiples of $\phi_{iB}$ and dim(colspan(B^T) = span(\phi_{i,B}).
            
            \comment{To see this, define $c_{ij}=\sigma_{i,A}\sigma_{j,B}\phi_{i,A}^\top \phi_{j,B}$, and assume that for some $i_0$ and $j_0$, $c_{i_0j_0}\neq 0$, and define $\mathcal{J} = \{1,\dots,a\}\times\{1,\dots,b\}-\{(i_0,j_0)\}$. Then $AB^\top = 0$ only if
            \begin{align*}
                -c_{i_0j_0}\psi_{i_0,A}\psi_{j_0,B}^\top &= \sum_{(i,j)\in\mathcal{J}}c_{ij}\psi_{i,A}\psi_{j_0,B}^\top \\
                -c_{i_0j_0}\bar\Psi_A E_{i_0,j_0} \bar\Psi_B^\top &= \sum_{(i,j)\in\mathcal{J}}c_{ij}\bar\Psi_AE_{ij}\bar\Psi_B^\top \\
                -c_{i_0j_0} E_{i_0,j_0}  &= \sum_{(i,j)\in\mathcal{J}}c_{ij}E_{ij},
            \end{align*}
            where $\bar\Psi_A$ and $\bar\Psi_B$ are the matrices whose columns are the vectors $\psi_{i,A}$ and $\psi_{i,B}$, respectively. However the set of elementary matrices is linearly independent, and as such there can never exist such set of $c_{ij}$.}%end of comment
            
            This immediately implies that $a+b\leq o$. Let $\bar o = o-a-b\geq 0$, then consider any set of orthonormal vectors $\{\phi_{i,0}\}_{i=1}^{\bar o}$ \comment{such that 
            \begin{equation*}
                \spanv{\{\phi_{i,0}\}_{i=1}^{\bar o}}\perp \spanv{\{\phi_{i,A}\}_{i=1}^a\cup \{\phi_{i,B}\}_{i=1}^b}
            \end{equation*}
            and
            \begin{equation*}
                \spanv{\{\phi_{i,0}\}_{i=1}^{\bar o}\cup\{\phi_{i,B}\}_{i=1}^b\cup\{\phi_{i,A}\}_{i=1}^a} = \mathbb{R}^o.
            \end{equation*}

            The set $\{\phi_{i,0}\}_{i=1}^{\bar o}$ always exists since $\bar o\geq 0$ with it being the empty set if $\bar o = 0$, just} that completes an orthonormal basis of $\mathbb{R}^o$ from $\{\phi_{i,B}\}_{i=1}^b\cup\{\phi_{i,A}\}_{i=1}^a$. 
            
            Define $\Phi_A$, $\Phi_B$ and $\Phi_0$ as the matrices whose columns are the vectors of $\{\phi_{i,A}\}_{i=1}^a$, $\{\phi_{i,B}\}_{i=1}^b$, and $\{\phi_{i,0}\}_{i=1}^{\bar o}$, respectively. With these definitions, we can express the matrix $\Phi$ as
            \begin{equation*}
                \small
                \Phi = \begin{bmatrix}
                    \Phi_A & \Phi_B & \Phi_0
                \end{bmatrix}.
            \end{equation*}

            Next, consider the sets $\{\psi_{i,A}\}_{i=1}^{p}$ and $\{\psi_{i,B}\}_{i=1}^{q}$ constructed such that the first $a$ and $b$ vectors of each set satisfy \eqref{eq:svdA} and \eqref{eq:svdB}, respectively. The remaining vectors are simply chosen to complete orthonormal bases for $\mathbb{R}^p$ and $\mathbb{R}^q$, respectively. Then build $\Psi_A$ as the matrix whose columns are the vectors of $\{\psi_{i,A}\}_{i=1}^p$ in the same order as in the set, and $\Psi_B$ as the matrix whose columns with indices from $a+1$ to $a+b$ are the first $b$ vectors of $\{\psi_{i,B}\}_{i=1}^{q}$ and whose remaining columns are the remaining vectors in the set in no particular order (here is where the assumption that $q>o$ is used, and the extra care on the order of the columns is necessary to make sure they match the order or the columns of $\Phi$).

            Finally, let $\Sigma_A\in\mathbb{R}^{p\times o}$ and $\Sigma_B\in\mathbb{R}^{q\times o}$ be rectangular diagonal matrices with the first $a$ elements of the main diagonal of $A$ being the elements of  $\{\sigma_{i,A}\}_{i=1}^{a}$, and the elements of the main diagonal of $\Sigma_B$ of indices from $a+1$ to $a+b$ being the elements of $\{\sigma_{i,B}\}_{i=1}^{b}$ (all remaining main diagonal elements are zero).
            
            With all matrices built as indicated, we can verify that
            \begin{equation*}
                \Psi_A\Sigma_A\Phi = \sum_{i=1}^a \psi_{i,A}\phi_{i,A}^\top\sigma_{i,A} = A
            \end{equation*}
            and
            \begin{equation*}
                \Psi_B\Sigma_B\Phi = \sum_{i=1}^b\psi_{i,B}\phi_{i,B}^\top \sigma_{i,B} = B,
            \end{equation*}
            which means that they are valid SVDs of $A$ and $B$, respectively. Furthermore, $\Sigma_A\Sigma_B^\top=0$ follows directly from their construction.
        \end{proof}

        \comment{\begin{remark}
            Notice that from the construction of $\Sigma_A$ and $\Sigma_B$ indicated in the Lemma, those matrices have the following structure:
            \begin{equation*}
                \Sigma_A = \SigBlkI[A]
            \end{equation*}
            and
            \begin{equation*}
                \Sigma_B =\SigBlkF[B]
            \end{equation*}
            where $\bar\Sigma_A$ and $\bar\Sigma_B$ are diagonal matrices whose main diagonal elements are the nonzero singular values of $A$ and $B$ respectively.
        \end{remark}}
        
        \comment{\begin{proof}
            %{\color{blue}We assume for the simplicity} of the proof that $\text{rank}(A)+\text{rank}(B) = \min(p,q)=o$, if that was not the case then we would have an intersection between the kernel of $B$ and $A^\top$ and we would need to consider an extra set of singular vectors corresponding to that intersection. Since them both correspond to parts of the kernels of $B$ and $A^\top$ we can always assume we pick them to be the same, and the main idea behind this proof would not change. 
            
            If $AB=0$ then $ABv=0$ for all $v\in\mathbb{R}^q$. By definition, any vector in the columnspan of $B$ can be written, for an arbitrary vector $v$, as $Bv$. However, since $ABv=0$, then $A(Bv)=0$ which means that any vector in the columnspan of $B$ must belong to the kernel of $A$ if $AB=0$. 
            
            Next, write some arbitrary SVD's of $A$ and $B$ as
            \begin{align}
                % \begin{split}
                    A = \begin{bmatrix}
                        \Psi_{1,A} & \Psi_{2,A} & \Psi_{3,A}
                    \end{bmatrix}\underbrace{\SigBlkI[A]}_{\Sigma_A}\begin{bmatrix}
                        \Phi_{1,A}^\top \\ \Phi_{2,A}^\top \\ \Phi_{3,A}^\top
                    \end{bmatrix} \nonumber\\
                    B = \begin{bmatrix}
                        \Phi_{1,B} & \Phi_{2,B} & \Phi_{3,B}
                    \end{bmatrix}\underbrace{\SigBlkF[B]}_{\Sigma_B}\begin{bmatrix}
                        \Psi_{1,B}^\top \\ \Psi_{2,B}^\top \\ \Psi_{3,B}^\top 
                    \end{bmatrix},
                % \end{split}
            \end{align}
            where $\Sigma_A$ and $\Sigma_B$ are sectioned with the same block dimensions. Furthermore, we require that $\bar\Sigma_A$ and $\bar\Sigma_B$ are rectangular diagonal matrices with their main diagonal elements being the nonzero singular values of $A$ and $B$, and $\Sigma_A$ and $\Sigma_B$ are sectioned such that they are proper singular value matrices of $A$ and $B$, that is, with the main diagonals of $\bar\Sigma_A$ and $\bar\Sigma_B$ being contained in the main diagonals of $\Sigma_A$ and $\Sigma_B$. Notice that while usually the singular value matrices are selected such that the singular values are in descending order in its main diagonal, we are obviously allowing for different orderings, based on the way we chose to write the SVD of $B$. 
            
            One could interpret the columns of $\Phi_{1,A}$ and of $\Phi_{3,B}$ as basis of the $\colspan{A^\top}$ and of $\colspan{B}$ respectively. In a similar manner, the columns of $[\Phi_{2,A}, ~\Phi_{3,A}]$ and $[\Phi_{1,B}, ~\Phi_{2,B}]$ are bases of the $\kernel{A}$ and $\kernel{B^\top}$, respectively, with the columns of $\Phi_{2,A}$ and $\Phi_{2,B}$ both being bases of $\kernel{A}\cap\kernel{B^\top}$. This interpretation is useful not only for understanding the proof of this Lemma, but also for the proof of Lemma \ref{lemma:eqs}.
            
            Since we showed that $\colspan{B}\subseteq\kernel{A}$, then $\colspan{\Phi_{3,B}}\subseteq\colspan{[\Phi_{2,A},\Phi_{3,A}]}$ (and similarly, $\colspan{\Phi_{1,A}}\subseteq\colspan{[\Phi_{1,B},\Phi_{2,B}]}$ for the transpose equation $B^\top A^\top=0$). Let $\Phi_2$ be a base of $\kernel{A}\cap\kernel{B^\top}$, we can select $\Phi_{2,A} = \Phi_{2,B} = \Phi_2$ and still have valid SVDs for $A$ and $B$, as long as we choose $\Phi_{3,A}$ and $\Phi_{1,B}$ to complete bases of $\kernel{A}$ and of $\kernel{B^\top}$ respectively, and redefine  $\psi_{2,A}$, $\psi_{3,A}$, $\psi_{2,B}$, and $\psi_{1,B}$ appropriately.
            
            Moreover, notice that $\kernel{A}\setminus\kernel{B^\top} = \colspan{\Phi_{3,A}} \subseteq \colspan{\Phi_{3,B}}$, that is, any vector that belongs in the kernel of $A$ but not in the kernel of $B^\top$, must be spanned by the columns of $\Phi_{3,B}$ since that is the orthogonal complement of the kernel of $B^\top$. But we also know from the fact that $\colspan{B}\subseteq\kernel{A}$ that:
            \begin{equation}
                \begin{split}
                    &\colspan{B}-\kernel{B^\top} = \colspan{B} \\&= \colspan{\Phi_{3,B}}\subseteq\kernel{A}-\kernel{B^\top} \\&= \colspan{\Phi_{3,A}},
                \end{split}
            \end{equation}
            therefore $\colspan{\Phi_{3,B}} = \colspan{\Phi_{3,A}}$. As such, define $\Phi_3 := \Phi_{3,B}$ and notice we can impose $\Phi_{3,A}=\Phi_3$ and still have a valid SVD of $A$, since the columns of $\Phi_3$ are still a valid orthonormal basis of $\kernel{A}-\kernel{B^\top}$. Performing the same logic analogously for $B^\top A^\top = 0$ tells us that we can define $\Phi_1 :=\Phi_{1,A}$ and pick $\Phi_{1,B} = \Phi_1$ and also still have a valid SVD of $B$. Compiling all these results renders the SVDs for $A$ and $B$ presented in the theorem statement:
            \begin{align}
                % \begin{split}
                    A &= \begin{bmatrix}
                        \Psi_{1,A} & \Psi_{2,A} & \Psi_{3,A}
                    \end{bmatrix}\SigBlkI[A]\begin{bmatrix}
                        \Phi_{1}^\top \\ \Phi_{2}^\top \\ \Phi_{3}^\top
                    \end{bmatrix} \nonumber\\&= \Psi_A\Sigma_A\Phi^\top \\
                    B &= \begin{bmatrix}
                        \Phi_{1} & \Phi_{2} & \Phi_{3}
                    \end{bmatrix}\SigBlkF[B]\begin{bmatrix}
                        \Psi_{1,A}^\top \\ \Psi_{2,B}^\top \\ \Psi_{3,B}^\top
                    \end{bmatrix} \nonumber\\&= \Phi \Sigma_B\Psi_B^\top.\nonumber
                % \end{split}
            \end{align}
        \end{proof}}%end of comment

        \begin{theorem}
            \label{lemma:eqs}
            For the dynamics given by \eqref{eq:ssgradflow}, and an arbitrary set of parameters $ P$ and $ Q$, the following are equivalent
            
            \begin{enumerate}
                \item $Z = [ P;  Q]$ is an equilibrium point of $f_Z$, that is $(\bar{Y}- P Q^\top) Q=0$ and $(\bar{Y}- P Q^\top)^\top P=0$;
                \item 
                (a) There exist an SVD of $\bar{Y}- P Q^\top=\Psi{\Sigma}\Phi^\top$, and orthogonal matrices $\Gamma_{ P}$ and $\Gamma_{ Q}$ such that $\Psi\Sigma_{ P}\Gamma_{ P}^\top$ and $\Phi\Sigma_{ Q}\Gamma_{ Q}^\top$ are SVDs of $ P$ and $ Q$; and (b) $\Sigma\Sigma_Q=0$ and $\Sigma^\top\Sigma_P=0$.
            \end{enumerate}
        \end{theorem}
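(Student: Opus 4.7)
My plan is to establish the two implications of the equivalence separately. Direction $2)\Rightarrow 1)$ is a one-line substitution, whereas $1)\Rightarrow 2)$ is the substantive part and reduces, through two applications of Lemma \ref{lemma:aux1}, to a careful choice of SVD of $\bar Y - PQ^\top$.

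\textbf{The easy direction.} For $2)\Rightarrow 1)$, I will simply substitute the hypothesized SVDs into the equilibrium equations and use the orthogonality of $\Phi$ and $\Psi$. Specifically, $(\bar Y-PQ^\top)Q = \Psi\Sigma\Phi^\top \cdot \Phi\Sigma_Q\Gamma_Q^\top = \Psi(\Sigma\Sigma_Q)\Gamma_Q^\top$, which vanishes by hypothesis (b); and $(\bar Y-PQ^\top)^\top P = \Phi(\Sigma^\top\Sigma_P)\Gamma_P^\top = 0$ follows analogously.

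\textbf{The main direction.} For $1)\Rightarrow 2)$, set $R := \bar Y - PQ^\top$, so the equilibrium conditions become $RQ = 0$ and $R^\top P = 0$. I will apply Lemma \ref{lemma:aux1} once to each equation. Applied to $RQ = 0$ with $A = R$ (of size $n\times m$) and $B = Q^\top$ (of size $k\times m$), noting that the overparameterization hypothesis $k\geq m$ supplies the dimensional requirement of the lemma, I obtain an SVD $R = \Psi\Sigma\Phi^\top$ together with an SVD of $Q^\top$ sharing the right-singular-vector matrix $\Phi$, whose singular value matrix has its nonzero diagonal block in positions disjoint from that of $\Sigma$. Transposing the second factorization yields $Q = \Phi\Sigma_Q\Gamma_Q^\top$ with $\Sigma\Sigma_Q = 0$. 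An analogous application to $R^\top P = 0$ (now using $k\geq n$) produces $P = \Psi\Sigma_P\Gamma_P^\top$ with $\Sigma^\top\Sigma_P = 0$, together with a second SVD of $R$ whose left-singular-vector matrix is this $\Psi$.

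\textbf{Main obstacle.} The difficulty is that the two applications of the lemma produce, a priori, two distinct SVDs of $R$, whereas statement $2)$ requires a single SVD of $R$ that is simultaneously compatible with the SVDs of $P$ and $Q$. I plan to resolve this by exploiting the freedom present in the SVD of a rank-deficient matrix: the singular vectors of $R$ associated with its nonzero singular values are determined (up to the usual SVD ambiguities) by $\colspan{R}$ and $\colspan{R^\top}$ and are therefore intrinsic to $R$, while the singular vectors associated with zero singular values may be chosen as any orthonormal bases of $\kernel{R^\top}$ and $\kernel{R}$. The explicit block description of $\Sigma_P$ and $\Sigma_Q$ provided by Lemma \ref{lemma:aux1} forces their nonzero blocks to fall within the zero-block rows/columns of $\Sigma$, so that the columns of $\Psi$ and $\Phi$ actually used in the SVDs of $P$ and $Q$ are precisely the free ones spanning $\kernel{R^\top}$ and $\kernel{R}$. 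Replacing those free columns does not alter $\Psi\Sigma\Phi^\top$, so starting from the SVD of $R$ obtained in the first application, I can substitute in the $\kernel{R^\top}$-columns of $\Psi$ dictated by the SVD of $P$ without disturbing the factorization of $R$. This yields a single consistent SVD of $R$ that realizes both sub-claims of $2)$, completing the proof.
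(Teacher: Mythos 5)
Your proof is correct and follows the paper's overall strategy --- two applications of Lemma~\ref{lemma:aux1} (to $(\bar Y-PQ^\top)Q=0$ and to $(\bar Y-PQ^\top)^\top P=0$) followed by a reconciliation of the two resulting SVDs of $R:=\bar Y-PQ^\top$ --- but it carries out the reconciliation in a genuinely cleaner way. The paper matches the two decompositions by imposing equality on the \emph{nonzero}-singular-value blocks: it first reorders so that $\bar\Sigma_1=\bar\Sigma_2$, invokes the identity $\Psi_{1,Q}\bar\Sigma\Phi_1^\top=\Psi_1\bar\Sigma\Phi_{1,P}^\top$ to justify replacing $\Psi_{1,Q}$ by $\Psi_1$ and $\Phi_1$ by $\Phi_{1,P}$, and then separately argues that the modified $\Phi$ is still compatible with the SVD of $Q$ before finally also swapping the kernel blocks. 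You instead splice in only the \emph{zero}-singular-value columns of $\Psi$ (those spanning $\kernel{R^\top}$), which leaves $\Psi\Sigma\Phi^\top$ literally unchanged because those columns multiply a zero block of $\Sigma$; no singular-value reordering, no rank-$\ell$ restriction identity, and no re-verification of $Q$'s SVD is needed. This works because, as you correctly observe, the block description of $\Sigma_P$ from Lemma~\ref{lemma:aux1} forces its nonzero block into the zero rows of $\Sigma$, so the SVD of $P$ only ``reads'' the kernel columns of $\Psi$; condition $\Sigma^\top\Sigma_P=0$ then follows from block positions alone. The one step left implicit in your sketch is that the spliced $\Psi'$ remains orthogonal, which is immediate since its first $\ell$ columns span $\colspan{R}$ and the replacement columns span $\kernel{R^\top}=\colspan{R}^\perp$; with that line made explicit, the argument is complete.
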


        \begin{proof}
            To prove that $2)$$\Rightarrow$$ 1)$ we simply compute $f_Z( P, Q)$ for a $( P, Q)$ that satisfies the properties in $2$ and verify that it is equal to zero. That is
            {
            \begin{align}
                % \begin{split}
                    f_Z( P, Q) =& \begin{bmatrix} 
                        (\bar{Y}-PQ^\top)Q \\ (\bar{Y}-PQ^\top)^\top P
                    \end{bmatrix} \nonumber\\=& \begin{bmatrix} 
                        \Psi{\Sigma}\Phi^\top \Phi\Sigma_{ Q}\Gamma_{ Q}^\top \\ \Phi{\Sigma}^\top\Psi^\top \Psi\Sigma_{ P}\Gamma_{ P}^\top
                    \end{bmatrix} \\=& \begin{bmatrix} 
                        \Psi{\Sigma}\Sigma_{ Q}\Gamma_{ Q}^\top \\ \Phi{\Sigma}^\top\Sigma_{ P}\Gamma_{ P}^\top
                    \end{bmatrix} = \begin{bmatrix}
                        0 \\ 0
                    \end{bmatrix}.\nonumber
                % \end{split}
            \end{align}}
            
            To prove that $1)$$\Rightarrow$$2)$, apply Lemma \ref{lemma:aux1} with $A = (\bar{Y}-PQ^\top)$ and $B = Q^\top$, which implies that $o=m\leq k=q$, since $f_P(P,Q) = (\bar{Y}-PQ^\top)Q = 0$, which allows us to write $ Q$ and $(\bar{Y}- P Q^\top)$ as follows:
            {
            \begin{align}
                \label{eq:lmauxintoQ}
                % \begin{split}
                    (\bar{Y}-PQ^\top) = \Psi_Q \underbrace{\SigBlkI[1]}_{\Sigma_1} \Phi^\top \\ Q = \Phi \underbrace{\SigBlkF[Q]}_{\Sigma_Q} \Gamma_Q.\nonumber
                % \end{split}
            \end{align}}
            
            Similarly, applying Lemma \ref{lemma:aux1} with $A = (\bar{Y}-PQ^\top)^\top$ and $B = P$ gives 
            {
            \begin{align}
                \label{eq:lmauxintoP}
                % \begin{split}
                    (\bar{Y}-PQ^\top) = \Psi \underbrace{\SigBlkI[2]}_{\Sigma_2} \Phi_P^\top \\ P = \Psi \underbrace{\SigBlkF[P]}_{\Sigma_P}\Gamma_P.\nonumber
                % \end{split}
            \end{align}}
            
            Notice that even if $\bar\Sigma_1\neq\bar\Sigma_2$, they must still have the same diagonal elements, albeit possibly in a different order. Changing the order of the elements of $\bar\Sigma_1$ and $\bar\Sigma_2$ so they match means swapping the columns of $\Psi$, $\Phi$, $\Psi_Q$ and $\Phi_P$, but we can also swap the singular vectors corresponding to the kernels of $P^\top$ and $Q^\top$ such that the results from Lemma \ref{lemma:aux1} still hold. As such we can assume without loss of generality that $\bar\Sigma_1=\bar\Sigma_2=\bar\Sigma$. Notice that this is enough to prove that $1\rightarrow 2b$, since $(\bar Y-PQ^\top)Q=\Psi_Q\Sigma\Phi^\top\Phi\Sigma_Q\Gamma_Q=0\iff \Sigma\Sigma_Q=0$ (and similarly for $\Sigma^\top\Sigma_P=0$).
            
            Next we write
            {
            \begin{align}
                % \begin{split}
                    \begin{bmatrix}
                    \Psi_{1,Q} & \Psi_{2,Q} & \Psi_{3,Q}
                \end{bmatrix}&\SigBlkI\begin{bmatrix}
                    \Phi_1^\top \\ \Phi_2^\top \\ \Phi_3^\top
                \end{bmatrix} \\ = \begin{bmatrix}
                    \Psi_1 & \Psi_2 & \Psi_3
                \end{bmatrix}&\SigBlkI\begin{bmatrix}
                    \Phi_{1,P}^\top \\ \Phi_{2,P}^\top \\ \Phi_{3,P}^\top
                \end{bmatrix},\nonumber
                % \end{split}
            \end{align}}
            which implies that 
            {
            \begin{align}
                \label{eq:lemequality}
                \Psi_{1,Q}\bar\Sigma\Phi_1^\top &= \Psi_1\bar\Sigma\Phi_{1,P}^\top.%\nonumber \\
                %\sum_{i=1}^{\rank{\bar\Sigma}}\psi_{i,Q}\phi_{i}^\top \bar\sigma_i &= \sum_{i=1}^{\rank{\bar \Sigma}}\psi_{i}\phi_{i,P}^\top \bar\sigma_i
            \end{align}}

            If we impose, for example, $\Psi_{1,Q} = \Psi_1$, then we must also impose $\Phi = \Phi_{1,P}$. This leaves the SVD of $P$ intact, but changes part of the SVD of $Q$. To show it still satisfies Lemma \ref{lemma:aux1}, consider

            \begin{equation*}
                Q = \begin{bmatrix}
                    \Phi_{1,P} & \Phi_2 & \Phi_3
                \end{bmatrix}\SigBlkF[P]\begin{bmatrix}
                    \Gamma_{1,P}^\top \\ \Gamma_{2,P}^\top \\ \Gamma_{3,P}^\top
                \end{bmatrix} = \Phi_2\bar\Sigma_{P}\Gamma_{2,P}^\top
            \end{equation*}
            which shows that the above is still a valid SVD of $Q$ as long as the span of the columns of $[\Phi_{1,P}, \Phi_3]$ is equal to the left Kernel of $Q$. Indeed, we know that $\colspan{[\Phi_{1}, \Phi_3]} = \kernel{Q^\top}$, and since \eqref{eq:lemequality} shows that $\colspan{\Phi_1} = \colspan{\Phi_{1,P}}$ (by contradiction) then we can conclude that $\colspan{[\Phi_{1,P}, \Phi_3]} = \kernel{Q^\top}$. We have, therefore, established that we can always match the singular vectors associated with the nonzero singular values of $\bar Y-PQ^\top$ for the SVDs in \eqref{eq:lmauxintoQ} and \eqref{eq:lmauxintoP} and still satisfy both conditions on Lemma \ref{lemma:aux1}.

            \comment{At first, lets assume all singular values are unique, then, since $\bar\Sigma$ is full rank by construction, and since the singular value matrices are the same, the columns of $\Psi_{1,Q}$ (resp $\Phi_{1,P}$) are the same as the columns of $\Psi_1$ (resp. $\Phi_1$) except for the sign possibly being swapped. But if an SVD exists with the signs swapped, then one with the signs matching also does, which means we can choose $\Psi_{1,Q}=\Psi_1$ (resp. $\Phi_{1,P}=\Phi_1$), perform the adequate sign changes in $\Phi_1$ (resp. $\Psi_1$) and still have a valid SVD for $\bar{Y}-PQ^\top$. Notice that the sign corrections that need to be done on the columns of $\Phi_1$ (resp. $\Psi_1$) do not affect the SVD of $Q$ (resp. $P$) since $\Phi_1$ is part of the basis of $\kernel{Q^\top}$ (resp. $\Psi_1$ is part of the basis of $\kernel{P^\top}$).
            
            This also holds for the case where $\bar\Sigma$ has repeating nonzero singular values, although the argument becomes a little more convoluted. Assume there exists some singular value $\tilde\sigma$ in $\bar\Sigma$ with multiplicity $r>1$ and assume (for convenience) that such singular value appears in sequence on the diagonal of $\bar\Sigma$ from position $1$ to position $r$. Let $\{\psi_i\}_{i=1}^r$ be the set of all singular vectors associated with $\tilde\sigma$ that are also columns of $\Psi_1$ and $\Psi_{1:r}$ be the matrix whose columns are such vectors. Define $\{\psi_{i,Q}\}_{i=1}^r$ and $\Psi_{1:r,Q}$ in a similar way but for $\Psi_{1,Q}$. 
            
            The first conclusion one can make is that $\colspan{\Psi_{1:r}}=\colspan{\Psi_{1:r,Q}}$, for if, for example, $\colspan{\Psi_{1:r}}\not\subset\colspan{\Psi_{1:r,Q}}$, then there exists $\tilde\psi\in\colspan{\Psi_{1:r}}$ such that $\tilde\psi\not\in \colspan{\Psi_{1:r,Q}}$ and $\tilde\psi$ is a singular vector of $\bar Y -PQ^\top$ associated to $\tilde\sigma$, but that would violate the assumption that $\{\psi_{i,Q}\}_{i=1}^r$ contains all singular vectors associated to $\tilde\sigma$. 
            
            Furthermore, let $M$ be any matrix with $r$ orthonormal columns such that $\colspan{M}=\colspan{\Psi_{1:r}}$, then we can swap the first $r$ columns of either $\Psi_1$ or $\Psi_{1,Q}$ with the columns of $M$, and still have a valid singular vector matrix of $\bar Y-PQ^\top$, albeit not for the same $\Phi_{1,P}$ and $\Phi_{1}$. Therefore, similarly to the case where all singular values have multiplicity $1$, we can impose $\Psi_{1,Q}=\Psi_1$ (resp. $\Phi_{1,P}=\Phi_1$), perform the adequate  changes in $\Phi_1$ (resp. $\Psi_1$) and still have a valid SVD for $\bar{Y}-PQ^\top$. Also similarly to the case without multiplicity, the changes in $\Phi_1$ (resp. $\Psi_1$) do not affect the SVD of $Q$ (resp. $P$) since they correspond to part of the basis of the kernel of $Q$ (resp. $P$).}%end of comment

            Next, notice that we can simply pick $\Psi_{3,Q} = \Psi_3$, $\Phi_{3,P} = \Phi_3$, since all matrices are related to the intersection of the kernels and their choice is arbitrary as long as they compose an orthonormal basis of $\kernel{\bar Y-PQ^\top}\cap\kernel{Q}$ and of $\kernel{\bar Y^\top-QP^\top}\cap\kernel{P}$ respectively. 
            
            For the remaining matrices, $\Psi_2$ and $\Phi_2$ are imposed by the SVDs of $P$ and $Q$ respectively, and as such cannot be changed arbitrarily. We can, however, freely change the columns of $\Psi_{2,Q}$ (resp. $\Phi_{2,P}$) as long as when composed with the columns of $\Psi_3$ (resp. $\Phi_3$) they form a basis of the kernel of $(\bar Y - PQ^\top)^\top$ (resp. $\bar Y - PQ^\top$). Therefore we can select $\Psi_{2,Q}$ (resp. $\Phi_{2,P}$) to be equal to $\Psi_2$ (resp $\Phi_2$) without any loss of generality, completing the proof.
        \end{proof}

        % \begin{remark}
        %     Notice that as a consequence of Theorem \ref{lemma:eqs}, for the general case there is not only a continuous set of equilibria in our target set $\bar Y=PQ^\top$ but also an infinite number of spurious equilibria, for which $\dot P= 0$ and $\dot Q = 0$ but $\bar Y \neq PQ^\top$. In fact, one can characterize all equilibria of the system as follows: given any $[P,Q]\in\target$ and any matrix $T\in\mathbb{R}^{k\times k}$, $[PT;Q(T^\dagger)^\top]$ is an equilibrium of the system, with it being in $\target$ if $T$ is full rank, and being a spurious equilibria otherwise (matrix $T^\dagger$ is the Moore-Penrose pseudo-inverse of $T$).
        % \end{remark}

        % \begin{remark}
        %     In the full version of the paper, we will use this theorem as a way to analyse the behaviour of the system locally around each equilibrium point, by exploring the given structure of the equilibria. This characterization makes it possible to write the Jacobian matrix of the system as a function of the singular value decomposition presented.
        % \end{remark}

        \begin{remark}
            In \cite{tarmoun2021understanding} the authors characterize the equilibria of the system for the symmetric case (when $P=Q$ and $\bar Y$ is positive definite). Reinterpreting statement $2)$ of Theorem \ref{lemma:eqs} for this case gives that $\bar Y - PP^\top = \Psi\Sigma\Psi^\top$ and $P = \Psi\Sigma_P\Gamma_P$, which in turn imposes that $\bar Y = \Psi\Sigma_Y\Psi^\top$. Using these SVDs, the equation $(\bar Y-PP^\top)P = 0$ becomes $(\Sigma_Y-\Sigma_P\Sigma_P^\top)\Sigma_P = 0$ which holds if for every $i$ either $\sigma_{i,Y} = \sigma_{i,P}^2$ or $\sigma_{i,P}=0$, which recovers their original result.
        \end{remark}
        
        The presence of multiple equilibria in our dynamics means that the approach done for the vector case does not immediately translates to the general case. Despite that, we look for ways to extend the intuition from the vector case where we can predict the global behaviour of the system from its linearization. To facilitate the local analysis of our system around our equilibria, we define the vectorized dynamics and show its equivalence to the original problem. After that, we derive the linearized dynamics
        
        \subsection{The Vectorized Dynamics}
        
            To compute the gradient of the vector field for the general case we will work with the dynamics on the vectorized space. For that goal, let $p = \vect{P}$ and $q=\vect{Q}$ and define the gradient dynamics for these new parameters as $\dot{p} = -\nabla_p\mathcal{L}(p,q)$ and $\dot{q}=-\nabla_q\mathcal{L}(p,q)$. Also, notice that 
            
            \begin{align}
                \mathcal{L} &= \frac{1}{2}\|\bar{Y}-PQ^\top\|_F^2 \nonumber\\&= \frac{1}{2}\|\vect{\bar Y}-\vect{PQ^\top}\|_2^2 \nonumber\\&= \frac{1}{2}\|\vect{\bar Y}-Q\otimes I\cdot p\|_2^2 \nonumber\\&= \frac{1}{2}\|\vect{\bar Y}-K_{mn}P\otimes I \cdot q\|_2^2
            \end{align}
            where for any two arbitrary positive integers $p$ and $q$, $K_{pq}\in\mathbb{R}^{pq\times pq}$ is the permutation matrix that solves for any matrix $M\in\mathbb{R}^{p\times q}$ the equation $\vect{M} = K_{pq}\cdot\vect{M^\top}$. From here we compute
            {\small
            \begin{align}
                % \begin{split}
                    \dot{p} &= -\nabla_p\mathcal{L} \nonumber\\&= Q^\top\otimes I\cdot \vect{\bar Y-PQ^\top} \nonumber\\&= \vect{(\bar Y-PQ^\top)Q} \nonumber\\&= \vect{\dot{P}} \\
                    \dot{q} &= P^\top\otimes I\cdot K_{mn}\vect{\bar Y-Q^\top} \nonumber\\&= \vect{(\bar Y-PQ^\top)^\top P} \nonumber\\&= \vect{\dot{Q}},
                % \end{split}
            \end{align}}
            that is, as defined, the dynamics of the vectorized parameters $P$ and $Q$ is the same as the vectorization of their original dynamics. This means we can work with the system in the vectorized space and from there take conclusions about the original system's behaviour. Let
            {\small
            \begin{align}
                \label{eq:fzdef}
                    \dot{z} &= \begin{bmatrix}
                    \vect{\dot{P}} \\ \vect{\dot{Q}}
                \end{bmatrix} \nonumber\\&= \begin{bmatrix}
                    \dot{p} \\ \dot{q}
                \end{bmatrix} \nonumber\\&= \begin{bmatrix} 
                    \vect{(\bar{Y}-PQ^\top)Q} \\ \vect{(\bar{Y}-PQ^\top)^\top P}
                \end{bmatrix} \nonumber\\&= \begin{bmatrix}
                    f_p(p,q) \\ f_q(p,q)
                \end{bmatrix} \nonumber\\&= f_z(z),
            \end{align}}
            and assume ${P}$ and ${Q}$ are an equilibrium of $f_Z$ (that is, ${p}$ and ${q}$ are an equilibrium of $f_z$), then one can show that:
            {\small
            \begin{align}
                f_p({p}+\Delta p, {q}) &= \vect{(\bar{Y}-{P}{Q}^\top){Q}}-\vect{\Delta P {Q}^\top {Q}} \nonumber\\&= f_p({p}, {q}) \underbrace{-Q^\top Q\otimes I }_{\nabla_p f_p({p},{q})}\Delta p
            \end{align}}
            {\small
            \begin{align}
                % \begin{split}
                    f_p({p},{q}+\Delta{q}) =& \vect{(\bar{Y}-{P}{Q}^\top){Q}}+\vect{(\bar{Y}-{P}{Q}^\top)\Delta Q} \nonumber\\&- \vect{{P}\Delta Q^\top {Q}} - \vect{{P}\Delta Q^\top \Delta Q} \nonumber\\ =& f_p({p},{q}) \nonumber\\&+ \underbrace{(I\otimes (\bar{Y}-{P}{Q}^\top)-{Q}^\top\otimes{P}K_{mk})}_{\nabla_qf_p({p},{q})}\Delta q \nonumber\\&+ \text{h.o.t.}
                % \end{split}
            \end{align}}
            {\small
            \begin{align}
                % \begin{split}
                    f_q({p}+\Delta{p},{q}) =& f_q({p},{q}) \nonumber\\&+ \underbrace{(I\otimes (\bar{Y}-{P}{Q}^\top)^\top-{P}^\top\otimes{Q}K_{nk})}_{\nabla_pf_q({p},{q})}\Delta p \nonumber\\&+ \text{h.o.t.}
                % \end{split}
            \end{align}}
            {\small
            \begin{equation}
                f_q({p},{q}+\Delta{q}) = f_q({p},{q})\underbrace{-P^\top P\otimes I}_{\nabla_qf_q({p},{q})}\Delta q 
            \end{equation}}
            
            Notice that for any pair of matrices $A\in \mathbb{R}^{n\times k}$ and $B\in\mathbb{R}^{m\times k}$, $K_{nk}$ and $K_{mk}$ also satisfies $A^\top \otimes B = K_{mk} B \otimes A^\top K_{nk}$. From this property we can see that, as we expected, the Hessian is symmetric, that is:
            
            \begin{equation}
                \nabla_qf_p({p},{q}) = \nabla_pf_q({p},{q})^\top
            \end{equation}
            since
            \begin{align}
                \nabla_Pf_q^\top({p},{q}) &= I\otimes (\bar{Y}-{P}{Q}^\top)-K_{nk}{P}\otimes{Q}^\top \nonumber\\&= I\otimes (\bar{Y}-{P}{Q}^\top)-{Q}^\top\otimes{P}K_{mk} \nonumber\\&= \nabla_qf_p({p},{q}).
            \end{align}
            
            We then write the linearization of the vectorized dynamics around an arbitrary equilibrium point as the Hessian:
            
            \comment{\begin{equation}
                \label{eq:genlindef}
                \nabla f_z = \begin{bmatrix}
                    -Q^\top Q\otimes I & I\otimes (\bar{Y}-{P}{Q}^\top)-{Q}^\top\otimes{P}K_{mk} \\ I\otimes (\bar{Y}-{P}{Q}^\top)^\top-{P}^\top\otimes{Q}K_{nk} & -P^\top P\otimes I
                \end{bmatrix}
            \end{equation}}
            \begin{equation}
                \label{eq:genlindef}
                \nabla f_z = \begin{bmatrix}
                    \nabla_pf_p(p,q) & \nabla_qf_p(p,q) \\ \nabla_pf_q(p,q) & \nabla_qf_q(p,q)
                \end{bmatrix}.
            \end{equation}
            {%\color{red}
            \subsubsection*{About the partitions of $P$, $Q$, and $\bar Y-PQ^\top$}
                
                Define $\bar p = \rank{P}$, $\bar q = \rank{Q}$ and $\ell = \rank{\bar Y-PQ^\top} = m-\bar q$. If $n>m$ and $\rank{\bar Y}=m$ then necessarily $\kernel{Q^\top}\cap\kernel{\bar Y-PQ^\top}=\emptyset$, sice if that was not the case and there were some nonzero $v$ in the kernel of $\bar Y-PQ^\top$ such that $Q^\top v=0$ then $0=(\bar Y-PQ^\top)v = \bar Yv-PQ^\top v = \bar Yv$ which contradicts the assumption that $\bar Y$ is full rank. Similarly, any vector in the span of the columns of $\Psi_3$ must be in the left kernel of $\bar Y$, by a similar argument. 
                
                Also, one can see that $\rank{\Gamma_{2,P}^\top\Gamma_{2,Q}}=\bar q$ for if there were some vector $v$ such that $\Gamma_{2,P}^\top\Gamma_{2,Q}v=0$ then, define $w = \Gamma_{2,Q}\bar\Sigma_Q^{-2}v$, and write
                
                \begin{align*}
                    (\bar Y-PQ^\top)Qw &= (\bar Y-PQ^\top)\Phi_2 \bar\Sigma_Q^{-1}v \\ 
                        &= \bar Y\Phi_2\bar\Sigma_{Q}^{-1}v - \Psi_2\bar\Sigma_P\Gamma_{2,P}^\top\Gamma_{2,Q}v \\
                        &= \bar Y\Phi_2\bar\Sigma_{Q}^{-1}v,
                \end{align*}
                which can only be zero if $\bar Y$ is rank deficient, breaking the assumption. \comment{A direct consequence of this is that $\kernel{P}\subseteq\kernel{Q}$, which means that if $\Gamma_k$ is a matrix with $k-\bar p$ orthogonal columns that compose a basis of the kernel of $P$, then $Q\Gamma_k=0$.}
                
                % Another choice one can make when picking the SVDs of our equilibria refeers to $\Gamma_{1,P}$ and $\Gamma_{1,Q}$. Since both have the same number of columns as the rank of $\bar Y-PQ^\top$ and are only required to have the span of their columns contained in the kernels of $P$ and $Q$ respectively, one can always pick their columns to be equal and part of a base of $\kernel{P}\cap\kernel{Q}$, since $\kernel{P}\subset\kernel{Q}$ (consequence of $\colspan{\Gamma_{2,Q}}\subset\colspan{\Gamma_{2,P}}$) and the dimension of $\bar Y-PQ^\top$ is $m-\bar q$ which is always smaller or equal than the dimension of the kernel of $P$, which is $k-\bar p$, since at an equilibrium $\bar p\leq \bar q +n-m$ and $k\geq n$. As such we can always pick $\Gamma_{1,P} = \Gamma_{1,Q} = \Gamma_{1}$.
                
                With all that established, given an arbitrary equilibrium $[P;Q]\neq 0$, we can simplify the SVDs from Theorem \ref{lemma:eqs} as follows
                
                \begin{align}
                    P &= \begin{bmatrix}
                        \Psi_1 & \Psi_2 & \Psi_3
                    \end{bmatrix}\begin{bmatrix}
                        0 & 0 & 0 \\ 0 & \bar \Sigma_{P} & 0 \\ 0 & 0 & 0
                    \end{bmatrix}\begin{bmatrix}
                    \Gamma_{1,P}^\top \\ \Gamma_{2,P}^\top \\ \Gamma_{3,P}^\top
                    \end{bmatrix} \\
                    Q &= \begin{bmatrix}
                        \Phi_1 & \Phi_2
                    \end{bmatrix}\begin{bmatrix}
                        0 & 0 & 0 \\ 0 & \bar\Sigma_Q & 0
                    \end{bmatrix}\begin{bmatrix}
                        \Gamma_{1,Q}^\top \\ \Gamma_{2,Q}^\top \\ \Gamma_{3,Q}^\top
                    \end{bmatrix}\\
                    \bar Y-PQ^\top &= \begin{bmatrix}
                        \Psi_1 & \Psi_2 & \Psi_3
                    \end{bmatrix}\begin{bmatrix}
                        \bar\Sigma & 0 \\ 0 & 0 \\ 0 & 0
                    \end{bmatrix}\begin{bmatrix}
                        \Phi_1^\top \\ \Phi_2^\top
                    \end{bmatrix},
                \end{align}
                where $\Phi_1$ and $\Psi_1$ both have $\ell$ columns, $\Phi_2$ has $\bar q$ columns, $\Psi_2$ has $\bar p$ columns, and $\Psi_3$ has $n-m-\bar p+\bar q$ columns. The matrices $\Gamma_{1,Q}$, $\Gamma_{1,P}$, $\Gamma_{2,Q}$ and $\Gamma_{2,P}$ have the same number of columns as $\Phi_1$, $\Psi_1$, $\Phi_2$ and $\Psi_2$ respectively. Matrices $\Gamma_{3,Q}$ and $\Gamma_{3,P}$ have $k-m$ and $k-m-\bar p+\bar q$ columns, respectively. In our next section we derive the linearization of our system around any equilibria as characterized by Theorem \ref{lemma:eqs} and present their eigenvalues. Knowing exactly how these matrices are partitioned is useful since we will define the eigenvectors of our linearizations as functions of our SVDs. Notice also that if $[P;Q]=0$ then $\bar p = \bar q = 0$ and if $\bar Y = PQ^\top$ then $\ell =0$.
                %
                % Furthermore, because $[P;Q]$ is a nonzero equilibrium, $\Psi_3$ must be a base of the left kernel of $\bar Y$. To show this, assume there is some $v^\top$ such that $v^\top Y = 0$ but $v^\top \Psi_2 \neq 0$, that is, $v$ is not completely in the span of the columns of $\Psi_3$. If that the case, then
                %
                % \begin{align*}
                %     v^\top (\bar Y - PQ^\top)Q &= -v^\top PQ^\top Q \\&= -v^\top \Psi_2\bar\Sigma_P\Gamma_{2,P}^\top \Gamma_{2,Q}\bar\Sigma_Q^2\Gamma_{2,Q}
                % \end{align*}
                %
                % which is zero if and only if there exists $w=-\bar\Sigma_P\Psi_2 v$ such that $w^\top\Gamma_{2,P}^\top\Gamma_{2,Q}=0$, but if that is the case, %then $PQ^\top=0$ which implies that $\dot P = \bar Y Q$ and $\dot Q = \bar Y^\top P$ which is only an equilibrium if $[P;Q]=0$.
                }
        \subsection{Linearization Around the Equilibria}
            \begin{lemma}
                The linearization of $f_z(z)$ given by \eqref{eq:fzdef}, around the equilibrium $[P;Q]=0$ is given by
                
                \begin{equation}
                    \label{eq:gradfz0}
                    \nabla f_z = \begin{bmatrix}
                        0 & I\otimes \bar{Y} \\ I\otimes \bar{Y}^\top & 0
                    \end{bmatrix},
                \end{equation}
                has as nonzero eigenvalues the set $\{\pm\bar\sigma_i\}_{i=1}^{\text{\emph{rank}}(\bar{Y})}$ each with multiplicity $k$ and, assuming $n>m$, as an orthogonal basis of eigenvectors the columns of
                
                \begin{equation}
                    V_{\nabla f_z} := \begin{bmatrix}
                        \Omega\otimes \Psi_1 & -\Omega \otimes \Psi_1 & \Omega \otimes \Psi_3\\ \Omega\otimes \Phi_1 & \Omega \otimes \Phi_1 & 0
                    \end{bmatrix}
                \end{equation}
                for $\Psi_1$ being the matrix whose columns contain the $m$ left singular vectors of $\bar{Y}$ associated with nonzero singular values, and any orthogonal matrix $\Omega\in\mathbb{R}^{k\times k}$.
            \end{lemma}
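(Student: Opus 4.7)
The plan is to first derive the form \eqref{eq:gradfz0} by direct substitution of $P=0$ and $Q=0$ into the general expression \eqref{eq:genlindef}. The diagonal blocks $-Q^\top Q\otimes I$ and $-P^\top P\otimes I$ vanish outright, while in the off-diagonal blocks the correction terms $Q^\top\otimes P K_{mk}$ and $P^\top\otimes Q K_{nk}$ also vanish because each carries a factor of $P$ or $Q$. The remaining factor $\bar Y - PQ^\top$ reduces to $\bar Y$, producing exactly the claimed form.

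Next, I would exploit the Kronecker tensor structure of $\nabla f_z$ to search for eigenvectors of the form $[\omega\otimes u;\,\omega\otimes v]$ with $\omega\in\mathbb{R}^k$, $u\in\mathbb{R}^n$, $v\in\mathbb{R}^m$. Using the mixed-product identity $(I\otimes M)(\omega\otimes w)=\omega\otimes(Mw)$, one gets $\nabla f_z [\omega\otimes u;\,\omega\otimes v] = [\omega\otimes \bar Y v;\,\omega\otimes \bar Y^\top u]$, so $[\omega\otimes u;\,\omega\otimes v]$ is an eigenvector with eigenvalue $\lambda$ if and only if $(u,v)$ satisfies $\bar Y v=\lambda u$ and $\bar Y^\top u=\lambda v$. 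This is precisely the singular value system for $\bar Y$: for each nonzero singular value $\bar\sigma_i$ with left/right singular vectors $\psi_i,\phi_i$, the choice $(u,v)=(\psi_i,\phi_i)$ gives $\lambda=\bar\sigma_i$, while $(u,v)=(-\psi_i,\phi_i)$ gives $\lambda=-\bar\sigma_i$. Zero eigenvalues arise by taking $v=0$ with $u$ any column of $\Psi_3$ (which spans the left kernel of $\bar Y$), since then both $\bar Y v=0$ and $\bar Y^\top u=0$.

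Then I would assemble the columns of $V_{\nabla f_z}$ by letting $\omega$ range over the columns of an arbitrary orthogonal $\Omega\in\mathbb{R}^{k\times k}$, producing $k$ eigenvectors for each $\pm\bar\sigma_i$ and $k$ zero eigenvectors per column of $\Psi_3$. The dimensional count, under the lemma's assumption that $\bar Y$ has $m$ nonzero singular values (so $\Psi_1$ has $m$ columns and $\Psi_3$ has $n-m$ columns when $n>m$), gives $mk + mk + (n-m)k = (n+m)k$, matching the ambient dimension of $\nabla f_z$.

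To close the proof I would verify mutual orthogonality of these columns, which combined with the dimensional count yields a complete orthogonal eigenbasis. Within each of the three blocks, orthogonality is inherited from orthogonality of the columns of $\Omega$ together with orthonormality of $\{\psi_i\}$ and $\{\phi_i\}$. The cross inner-product between a $+\bar\sigma_i$ column and a $-\bar\sigma_j$ column evaluates to $(\omega_1^\top\omega_2)(\phi_i^\top\phi_j-\psi_i^\top\psi_j)$, which vanishes for every $i,j$; orthogonality between either of the nonzero-eigenvalue blocks and the zero block follows because the top component pairs $\psi_i$ with a column of $\Psi_3$ (orthogonal by construction of the SVD), while the bottom component of the zero block is identically $0$. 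The main place requiring care will be this bookkeeping for the partition of $\Psi$ under the standing hypothesis $n>m$ and full column rank of $\bar Y$; once that partition is fixed, the rest reduces to linear-algebraic verification.
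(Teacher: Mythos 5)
Your proof is correct and follows essentially the same path as the paper: substitute $P=Q=0$ into \eqref{eq:genlindef} to obtain \eqref{eq:gradfz0}, then read off the eigenvectors from the SVD of $\bar Y$ and count $mk+mk+(n-m)k=(n+m)k$ to conclude completeness. The paper verifies $\nabla f_z\cdot V_{\nabla f_z}=V_{\nabla f_z}\cdot\operatorname{diag}(I\otimes\bar\Sigma_{\bar Y},-I\otimes\bar\Sigma_{\bar Y},0)$ in one block computation, whereas you first factor out the Kronecker structure by the ansatz $[\omega\otimes u;\,\omega\otimes v]$ and reduce to the scalar coupled system $\bar Y v=\lambda u$, $\bar Y^\top u=\lambda v$, which is the singular-value system for $\bar Y$; this is a slightly more transparent way of seeing why the SVD vectors appear, and you also make the orthogonality check explicit (it is left implicit in the paper), but the ideas and the bookkeeping are the same.
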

            
            \begin{proof}
                Notice that if $[P;Q]=0$ then $\Psi_2$ and $\Phi_2$ are empty and the SVD of $\bar Y-PQ^\top = \bar Y$ becomes simply
                
                \begin{align*}
                    \bar Y = \Psi\Sigma\Phi^\top = \begin{bmatrix}
                        \Psi_1 & \Psi_3
                    \end{bmatrix}\begin{bmatrix}
                        \bar\Sigma \\ 0
                    \end{bmatrix}\begin{bmatrix}
                        \Phi_1^\top
                    \end{bmatrix}.
                \end{align*}
                
                Also, notice that for this case, $\Sigma = \Sigma_{\bar Y}$. Rewrite $\nabla f_z$ from \eqref{eq:gradfz0} as follows:
                
                \begin{equation}
                    \small
                    \nabla f_z = \begin{bmatrix}
                        0 & I\otimes(\Psi\Sigma_{\bar Y}\Phi^\top) \\ I\otimes(\Phi\Sigma_{\bar Y}^\top\Psi^\top) & 0
                    \end{bmatrix}
                \end{equation}
                and multiply it from the right by $V_{\nabla f_z}$ as below:
                
                \begin{equation}
                    \small
                    \begin{split}
                        \nabla f_z \cdot V_{\nabla f_z} &= \begin{bmatrix}
                            \Omega\otimes\Psi\Sigma_{\bar{Y}} & \Omega\otimes\Psi\Sigma_{\bar{Y}} & 0 \\ \Omega\otimes\Phi\Sigma_{\bar{Y}}^\top  & -\Omega\otimes\Phi\Sigma_{\bar{Y}}^\top  & 0
                        \end{bmatrix} \\ &= \begin{bmatrix}
                        \Omega\otimes \Psi_1 & -\Omega \otimes \Psi_1 & \Omega \otimes \Psi_3\\ \Omega\otimes \Phi & \Omega \otimes \Phi & 0
                    \end{bmatrix} \\&~~~~~~~~~~~~\times \begin{bmatrix}
                        I\otimes \bar\Sigma_{\bar Y} & 0 & 0 \\ 0 & -I\otimes\bar\Sigma_{\bar Y} & 0 \\ 0 & 0 & 0
                    \end{bmatrix}.
                    \end{split}
                \end{equation}
                
                Notice that since $\Omega$ is an arbitrary orthogonal matrix in $\mathbb{R}^{k\times k}$, the given matrix of eigenvectors has $mk + mk + (n-m)k = (n+m)k$ orthonormal columns, which means it is a complete base of eigenvectors.
            \end{proof}
            \begin{lemma}
                \label{thm:TargetSetLinearization}
                The linearization of $f_z(z)$, given by \eqref{eq:fzdef}, around our target set $\target:=\{[P;Q]\in\mathbb{R}^{(n+m)\times k}~|~\bar{Y}=PQ^\top\}$ is given by
                
                \begin{equation}
                    \label{eq:lintargsetdef}
                    \nabla f_z = -\begin{bmatrix}
                    Q^\top Q\otimes I & {Q}^\top\otimes{P}K_{mk} \\ {P}^\top\otimes{Q}K_{nk} & P^\top P\otimes I
                    \end{bmatrix},
                \end{equation}
                has as eigenvalues the diagonal elements of
                
                \begin{equation}
                    \small
                        \Lambda_{\nabla f_z} = -\begin{bmatrix}
                        \bar\Sigma_Q^2\otimes I + I\otimes \bar\Sigma_P^2 & 0 & 0 & 0 & 0\\ 0 & \bar\Sigma_Q^2\otimes I & 0 & 0 & 0 \\ 0 & 0 & 0 & 0 & 0 \\ 0 & 0 & 0 & 0 & 0 \\ 0 & 0 & 0 & 0 & 0
                    \end{bmatrix},
                \end{equation}
                and as an orthogonal basis of eigenvectors the columns of 
                
                \begin{equation}
                    \small
                    V_{\nabla{f_z}} := \begin{bmatrix}
                            V_1 & V_2 & V_3 & V_4 & V_5
                \end{bmatrix},
                \end{equation}
                with 
                \begin{align*}
                    V_1 &= \begin{bmatrix}
                        \left(\Gamma_{2,Q}\bar \Sigma_Q\right)\otimes\Psi_2 \\ K_{mk}\cdot \Phi_2\otimes \left(\Gamma_{2,P}\bar\Sigma_P^\top\right)
                    \end{bmatrix}\in\mathbb{R}^{(m+n)k\times m\bar{p}} \\ 
                    V_2 &= \begin{bmatrix}
                        \Gamma_{2,Q}\otimes \Psi_3 \\ 0
                    \end{bmatrix} \in\mathbb{R}^{(m+n)k\times m(n-\bar{p})}\\
                    V_3 &= \begin{bmatrix}
                        -\Gamma_{2,Q} \otimes \left(\Psi_2\bar\Sigma_P\right) \\ K_{mk}\cdot \left(\Phi_1\bar\Sigma_Q\right)\otimes \Gamma_{2,P}
                    \end{bmatrix}\in\mathbb{R}^{(m+n)k\times m\bar{p}} \\ 
                    V_4 &= \begin{bmatrix}
                        \Gamma_{3,Q}\otimes \Psi \\ 0
                    \end{bmatrix}\in\mathbb{R}^{(m+n)k\times (k-m)n} \\
                    V_5 &= \begin{bmatrix}
                        0 \\ K_{mk}\cdot\Phi\otimes\Gamma_{3,P}
                    \end{bmatrix}\in\mathbb{R}^{(m+n)k\times m(k-\bar{p})}
                \end{align*}
                where $K_{nk}$ is the permutation matrix that solves for $A\in\mathbb{R}^{n\times k}$ the equation $\vect{A^\top} = K_{nk}\vect{A}$.
            \end{lemma}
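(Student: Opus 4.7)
The first statement, the explicit form of $\nabla f_z$ at the target set, follows immediately from \eqref{eq:genlindef} by substituting $\bar Y - PQ^\top = 0$, which annihilates the $I \otimes (\bar Y - PQ^\top)$ terms in both off-diagonal blocks. The eigendecomposition is the main work. My plan is to verify directly that $\nabla f_z \cdot V_{\nabla f_z} = V_{\nabla f_z} \cdot \Lambda_{\nabla f_z}$ by acting on each column block $V_i$ separately, using the SVDs of $P$ and $Q$ specialized at a target-set equilibrium (where $\ell = 0$, so $\bar q = m$ and $\Phi_1$, $\Psi_1$ are empty) together with the commutation-matrix identity $K_{pq}(A \otimes B) = (B \otimes A) K_{rs}$.

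I would first dispatch the three kernel blocks $V_3$, $V_4$, and $V_5$. Since the columns of $\Gamma_{3,Q}$ and $\Gamma_{3,P}$ span the right kernels of $Q$ and $P$ respectively, we have $Q \Gamma_{3,Q} = 0$, $P \Gamma_{3,P} = 0$, and consequently $Q^\top Q \Gamma_{3,Q} = 0$ and $P^\top P \Gamma_{3,P} = 0$; from these every term of $\nabla f_z$ vanishes on $V_4$ and $V_5$. For $V_3$ both the diagonal and off-diagonal contributions are individually nonzero, but rewriting the off-diagonal term using the commutation identity exhibits a cancellation that produces the zero eigenvalue. Next, for $V_2 = [\Gamma_{2,Q} \otimes \Psi_3;\, 0]$, the matrix $\Psi_3$ lies in the left null space of $P$, so $P^\top \Psi_3 = 0$; this kills the bottom row and the off-diagonal top contribution, while the remaining diagonal term simplifies via $Q^\top Q = \Gamma_{2,Q}\bar\Sigma_Q^2\Gamma_{2,Q}^\top$ to $-(\Gamma_{2,Q} \otimes \Psi_3)(\bar\Sigma_Q^2 \otimes I)$, exhibiting $V_2$ as an eigenspace with eigenvalue block $-\bar\Sigma_Q^2 \otimes I$.

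The bulk of the work is $V_1$, whose eigenvalue $-(\bar\Sigma_Q^2 \otimes I + I \otimes \bar\Sigma_P^2)$ mixes information from both parameter matrices, so all four blocks of $\nabla f_z$ contribute. Applying $-(Q^\top Q \otimes I)$ to the top half of $V_1$ produces the $\bar\Sigma_Q^2 \otimes I$ factor after the same $Q^\top Q$ simplification as above. The off-diagonal block $-(Q^\top \otimes P) K_{mk}$, after pushing the commutation matrix through the Kronecker product via the identity above, couples the two halves of $V_1$ and delivers the $I \otimes \bar\Sigma_P^2$ contribution back into the top eigenvector; the symmetric calculation on the bottom half closes the loop. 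I expect this Kronecker-plus-commutation bookkeeping to be the main obstacle, since each use of the identity requires carefully matching the dimensions of the factors to those of $K_{mk}$ and $K_{nk}$, and a single misplaced transpose in the chain of substitutions would spoil the eigenvalue match.

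Finally, for completeness and orthogonality, the column counts sum correctly to $m\bar p + m(n-\bar p) + m\bar p + (k-m)n + m(k-\bar p) = (m+n)k$, matching the ambient dimension. Mutual orthogonality between different $V_i$ follows from the orthogonality of the partitioned blocks $\Psi_j$, $\Phi_j$, $\Gamma_{j,P}$, $\Gamma_{j,Q}$ coming from the SVDs, and orthonormality within each $V_i$ follows because each is a Kronecker product of orthonormal factors, possibly premultiplied by $K_{mk}$ or $K_{nk}$, which are themselves orthogonal. This makes $V_{\nabla f_z}$ an orthogonal matrix and simultaneously yields the full eigenbasis and the correct multiplicities.
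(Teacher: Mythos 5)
Your proposal is correct in outline and follows essentially the same route as the paper: specialize the block Hessian \eqref{eq:genlindef} at a point where $\bar Y = PQ^\top$, re-express each of its four blocks in terms of the SVD factors of $P$ and $Q$ (with $\ell=0$, $\bar q = m$), and verify $\nabla f_z \, V_i = V_i \, \Lambda_i$ block by block using the commutation-matrix identity, then count columns and invoke orthogonality of the SVD partitions. One caveat worth noting: you correctly observe that $\Phi_1$ is empty at the target set, yet $V_3$ as written contains $\Phi_1\bar\Sigma_Q$ — this is a typo in the lemma (it should read $\Phi_2\bar\Sigma_Q$), and your vague mention of ``a cancellation'' for $V_3$ would need that correction to go through; also, the orthogonality of $V_1$ against $V_3$ is not a consequence of distinct SVD blocks being orthogonal (they share $\Psi_2,\Phi_2,\Gamma_{2,P},\Gamma_{2,Q}$) but of the sign flip in $V_3$, so that step deserves an explicit computation.
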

            
            \begin{proof}
                First, notice that since we are in $\target$, $\ell=0$ which means that we can write $P$ and $Q$ as
                
                \begin{align*}
                    P &= \begin{bmatrix}
                        \Psi_2 & \Psi_3
                    \end{bmatrix}\begin{bmatrix}
                        \bar\Sigma_P & 0 \\ 0 & 0 
                    \end{bmatrix}\begin{bmatrix}
                        \Gamma_{2,P}^\top \\ \Gamma_{3,P}^\top
                    \end{bmatrix} \\
                    Q &= \begin{bmatrix}
                        \Phi_2
                    \end{bmatrix}\begin{bmatrix}
                        \bar\Sigma_Q & 0
                    \end{bmatrix}\begin{bmatrix}
                        \Gamma_{2,Q}^\top \\ \Gamma_{3,Q}^\top 
                    \end{bmatrix}
                \end{align*}
                
                Then, equation \eqref{eq:lintargsetdef} is obtained by simply combining \eqref{eq:genlindef} and $\bar{Y}=PQ^\top$. We then proceed to obtain its form as a function of some SVDs of its parameters parameters as follows:
                
                \begin{align}
                    Q^\top Q\otimes I &= \Gamma_Q \otimes \Omega_Q \cdot \Sigma_Q^T\Sigma_Q\otimes I \cdot \Gamma_Q^\top \otimes \Omega_Q^\top \\
                    {Q}^\top\otimes{P}K_{mk} &= \Gamma_Q \otimes \Psi \cdot \Sigma_Q^\top \otimes \Sigma_P \cdot \Phi^\top \otimes \Gamma_P^\top \cdot K_{mk} \\
                    {P}^\top\otimes{Q}K_{nk} &= \Gamma_P \otimes \Phi \cdot \Sigma_P^\top \otimes \Sigma_Q \cdot \Psi^\top \otimes \Gamma_Q^\top \cdot K_{nk} \\
                    P^\top P\otimes I &= \Gamma_P \otimes \Omega_P \cdot \Sigma_P^T\Sigma_P\otimes I \cdot \Gamma_P^\top \otimes \Omega_P^\top
                \end{align}
                
                \comment{\begin{align}
                    \nabla f_z &= -\begin{bmatrix}
                        Q^\top Q\otimes I & {Q}^\top\otimes{P}K_{mk} \\ {P}^\top\otimes{Q}K_{nk} & P^\top P\otimes I
                    \end{bmatrix}\nonumber \\&= -\begin{bmatrix}
                        \Gamma_Q \otimes \Omega_Q \cdot \Sigma_Q^T\Sigma_Q\otimes I \cdot \Gamma_Q^\top \otimes \Omega_Q^\top & \Gamma_Q \otimes \Psi \cdot \Sigma_Q^\top \otimes \Sigma_P \cdot \Phi^\top \otimes \Gamma_P^\top \cdot K_{mk} \\ \Gamma_P \otimes \Phi \cdot \Sigma_P^\top \otimes \Sigma_Q \cdot \Psi^\top \otimes \Gamma_Q^\top \cdot K_{nk} & \Gamma_P \otimes \Omega_P \cdot \Sigma_P^T\Sigma_P\otimes I \cdot \Gamma_P^\top \otimes \Omega_P^\top
                    \end{bmatrix}
                \end{align}}%end of comment
            
                where $\Omega_P$ and $\Omega_Q$ are arbitrary orthogonal matrices, which we select to be $\Omega_P = \Phi$ and $\Omega_Q = \Psi$. One can, then, verify that
                
                \begin{align*}
                    \nabla f_z\cdot V_1 &= 
                        V_1\cdot (\bar\Sigma_Q^2\otimes I + I\otimes \bar\Sigma_P^2) \\
                    \nabla f_z\cdot V_2 &= 
                        V_2\cdot (\bar\Sigma_Q^2\otimes I) \\
                    \nabla f_z\cdot V_3 &= 0 \\
                    \nabla f_z\cdot V_4 &= 0 \\
                    \nabla f_z\cdot V_5 &= 0.
                \end{align*}
                Furthermore, $V_{\nabla f_z}$ has $(n+m)k$ orthogonal columns and as such is a basis of eigenvectors of our linearization.
        \end{proof}
        
        \begin{remark}
            Notice that our target set is formed by $mn$ constraints and as such has dimension $(m+n)k-mn$. The given set of eigenvectors has $mn$ eigenvectors associated with strickly negative eigenvalues and $(m+n)k-mn$ eigenvectors associated with zero eigenvalues, proving local convergence of $\target$.
        \end{remark}

        An interesting consequence of Lemma \ref{thm:TargetSetLinearization} is the effect of imbalance on the local convergence rate of our trajectories. To observe that, consider a point $[P;Q]$ in our target set such that $P=\Psi\Sigma_P\Gamma^\top$ and $Q=\Phi\Sigma_Q\Gamma^\top$ and define $T(\xi) = \xi I_k$. Then, it is easy to observe that any point of the form $[PT ; QT^{-1}]$ is also in the target set, however, the nonzero eigenvalues or our linearization around $[PT;QT^{-1}]$ are given by $(1/\xi)\bar\Sigma_Q^2\otimes I + I\otimes \xi\bar\Sigma_P^2$ and $(1/\xi)\bar\Sigma_Q^2\otimes I$ which grow unbounded as $\xi\rightarrow 0$. In some sense, this recovers locally around the target set the conclusions present in \cite{min2021explicit} about the level of imbalance accelerating the convergence.
        
        \comment{\color{red}\begin{lemma}
            The linearization of $f_z(z)$ given by \eqref{eq:fzdef}, around an arbitrary equilibrium point, characterized in Theorem \ref{lemma:eqs} is given by computing \eqref{eq:genlindef} at the equilibrium point, and has as eigenvectors the columns of 
            
            \begin{align*}
                V_{\nabla f_z} = \begin{bmatrix}
                    V_1 & V_2 & V_3 & V_4 & V_5 & V_6 & V_7 & V_8
                \end{bmatrix}
            \end{align*}
            where, being $\Gamma_k$ a matrix whose $k-\bar p$ columns compose an orthonormal basis of $\kernel{P}$,
            
            \begin{align*}
                V_1 &= \begin{bmatrix}
                    \Gamma_k\otimes\Psi_1 \\ \Gamma_k\otimes \Phi_1
                \end{bmatrix}\\
                V_2 &= \begin{bmatrix}
                    -\Gamma_k\otimes\Psi_1 \\ \Gamma_k\otimes \Phi_1
                \end{bmatrix}\\
                V_3 &= \begin{bmatrix}
                    (\Gamma_{2,Q}\bar\Sigma_Q^\top)\otimes\Psi_2 \\ K_{mk}\cdot \Phi_2\otimes(\Gamma_{2,P}\bar\Sigma_P)
                \end{bmatrix}\\
                V_4 &= \begin{bmatrix}
                    \Gamma_{2,Q}\otimes\Psi_3 \\ 0
                \end{bmatrix}\\
                V_5 &= \begin{bmatrix}
                    \left[\Gamma_{1,Q} ~~ \Gamma_{3,Q}\right]\otimes \left[\Psi_2 ~~ \Psi_3\right] \\ 0
                \end{bmatrix}\\
                V_6 &= \begin{bmatrix}
                    -\Gamma_{2,Q}\otimes(\Psi_2\bar\Sigma_P) \\ K_{mk}\cdot (\Phi_2\bar\Sigma_Q)\otimes\Gamma_{2,P}
                \end{bmatrix}\\
                V_7 &= \begin{bmatrix}
                    0 \\ K_{mk}\cdot\Phi_2\otimes\left[\Gamma_{1,P} ~~ \Gamma_{3,P}\right]
                \end{bmatrix}
            \end{align*}
            
            \textcolor{red}{This is not yet a complete basis of eigenvectors, we are missing a few.}
        \end{lemma}}
        
        \comment{Even so, in \cite{min2021explicit} the authors present interesting results for the convergence of the system to the target set for the undisturbed case. They guarantee convergence to the target set for any initial condition except for a set of dimension zero, which indicates a possible robustness to disturbances, as long as a minimum distance from such set can be guaranteed from the initial conditions. From Theorem 1 of \cite{min2021explicit} we have the an exponential bound on our cost function $\mathcal{L}(P,Q)$ for the undisturbed case, where the exponential constant is a function of the eigenvalues of the imbalance matrix defined as $\Lambda = P^\top P+Q^\top Q$. In the paper the authors explore the invariance of $\Lambda$ along any trajectory to formulate this bound, however once we allow for disturbances on the computation of the gradient, the invariance property of $\Lambda$ is lost, as can be seen by computing $\dot\Lambda(t)$ for the disturbed dynamics
        \begin{align}
            % \begin{split}
                \dot\Lambda =& ~\dot P^\top P + P^\top \dot P - \dot Q^\top Q - Q^\top \dot Q \nonumber\\=& ~~~Q^\top(\bar Y-PQ^\top)^\top P + U^\top P \nonumber\\&+ P^\top (\bar Y-PQ^\top)Q + P^\top U \nonumber\\&- P^\top(\bar Y-PQ^\top)Q-V^\top Q \\&- Q^\top (\bar Y-PQ^\top)^\top P - Q^\top V \nonumber\\ =&~ U^\top P + P^\top U - V^\top Q - Q^\top V,\nonumber
            % \end{split}
        \end{align}
        and as such, the results derived in \cite{min2021explicit} are not immediately applicable. Nonetheless, the fact that we can guarantee exponential convergence with some ``margin'' that depends only on our initialization for the undisturbed case intuitively indicates our trajectories might accept some level of disturbance while still converging to our desired target set, motivating further works on this subject.}
    
\section{Conclusions and Future Works}

    In this paper we formulated the overparameterized linear regression problem as a matrix factorization and presented a dissipation-like inequality for the general problem when the parameters are trained through an uncertain gradient flow. The bound obtained, however, does not guarantee convergence to the target set $\target$ for any initial condition, which prompts the search for invariant subsets of the state space in which the system can be shown to be ISS.

    In this publication we focus on the solution of the problem for the case where the neural network has a single input and a single output. In this situation the parameter matrices reduce to vectors and the analysis is significantly simplified. We characterize the behavior of the system when training through exact gradient flow and formulate necessary and sufficient conditions for its convergence to the target set. We then use those conditions as a guideline to formulate sufficient conditions on the initialization of our system and on the maximum admissible disturbance on the estimation of the gradient that if satisfied guarantee that the system is ISS.

    We finish the paper with a discussion about the general case and some preliminary results. We show that in general the dynamics become significantly more complicated with the appearance of multiple sets of spurious equilibria, and provide the local behavior of the system through its linearization around boths the origin and the target set. While there are results in the literature that guarantee convergence to the target set for the general case, their extension to the disturbed case is not straightforward. 

    Current research being conducted by the authors for the general case indicate that despite its more complex dynamics, we can still predict the behaviour of the general case based on its linearization around the origin, similarly as to how we solve the problem on the vector case. We are currently looking into how we can use this knowledge to characterize regions of our state space where the ISS property is guaranteed in general.

    We can, however, conclude that by opting for an overparameterized formulation, our system ceases to be ISS for the whole state space when subject to gradient flow, contrary to the non overparameterized case, as shown in \cite{sontag2022remarks}. This indicates a possible trade-off on using an overparameterized formulation for performing linear regression, even if it is eventually shown that it can be circumvented by a knowledgeable choice of initial condition.

\bibliographystyle{IEEEtran}
\begin{spacing}{.8}
\bibliography{OverParmLinReg.bib}{}
\end{spacing}

% \section*{Appendix}

\end{document}